\def\eqref#1{equation~\ref{#1}}
\def\1{\bm{1}}
\def\eps{{\epsilon}}
\DeclareMathAlphabet{\mathsfit}{\encodingdefault}{\sfdefault}{m}{sl}
\SetMathAlphabet{\mathsfit}{bold}{\encodingdefault}{\sfdefault}{bx}{n}
\newtheorem{theorem}{Theorem}
\newtheorem{proposition}[theorem]{Proposition}
\theoremstyle{definition}
\newtheorem{assumption}[theorem]{Assumption}
\newtheorem{hypothesis}[theorem]{Hypothesis}
\title{Rethinking Diffusion Posterior Sampling:\\ From Conditional Score Estimator to Maximizing a Posterior}
\author{Tongda Xu$^{1,2}$, Xiyan Cai$^{1,3}$, Xinjie Zhang$^{4}$, Xingtong Ge$^5$, Dailan He$^{6}$,\\
$^1$Institute for AI Industry Research, Tsinghua University\\
$^2$Department of Computer Science and Technology, Tsinghua University \\
\AND Ming Sun$^7$, Jingjing Liu$^{1}$, Ya-Qin Zhang$^{1}$, Jian Li$^{8*}$, Yan Wang$^{1}$\thanks{To whom the correspondence should be addressed.} \\ 
$^3$New York University, $^4$Hong Kong University of Science and Technology, $^5$SenseTime Research\\
$^6$The Chinese University of Hong Kong, $^7$Kuaishou Technology \\
$^8$Institute for Interdisciplinary Information Sciences, Tsinghua University\\
\texttt{x.tongda@nyu.edu, lapordge@gmail.com, wangyan@air.tsinghua.edu.cn}
}
\begin{document}

\maketitle
    
\begin{abstract}

Recent advancements in diffusion models have been leveraged to address inverse problems without additional training, and Diffusion Posterior Sampling (DPS) (Chung et al., 2022a) is among the most popular approaches. Previous analyses suggest that DPS accomplishes posterior sampling by approximating the conditional score. While in this paper, we demonstrate that the conditional score approximation employed by DPS is not as effective as previously assumed, but rather aligns more closely with the principle of maximizing a posterior (MAP). This assertion is substantiated through an examination of DPS on 512$\times$512 ImageNet images, revealing that: 1) DPS’s conditional score estimation significantly diverges from the score of a well-trained conditional diffusion model and is even inferior to the unconditional score; 2) The mean of DPS’s conditional score estimation deviates significantly from zero, rendering it an invalid score estimation; 3) DPS generates high-quality samples with significantly lower diversity. In light of the above findings, we posit that DPS more closely resembles MAP than a conditional score estimator, and accordingly propose the following enhancements to DPS: 1) we explicitly maximize the posterior through multi-step gradient ascent and projection; 2) we utilize a light-weighted conditional score estimator trained with only 100 images and 8 GPU hours. Extensive experimental results indicate that these proposed improvements significantly enhance DPS's performance. The source code for these improvements is provided in \href{https://github.com/tongdaxu/Rethinking-Diffusion-Posterior-Sampling-From-Conditional-Score-Estimator-to-Maximizing-a-Posterior}{this link}.

\end{abstract}


\section{Introduction}
In recent years, diffusion models have emerged as a powerful tool for solving inverse problems without additional training \citep{song2020score,Chung2022DiffusionPS,rout2024solving}. One prominent approach leveraging diffusion models to tackle inverse problems is Diffusion Posterior Sampling (DPS) \citep{Chung2022DiffusionPS}. DPS has gained significant attention as it effectively produce high quality samples for various image restoration problems, such as super resolution and deblurring.

The conventional understanding of DPS posits that it approximates the conditional score to achieve posterior sampling \citep{Chung2022DiffusionPS,Song2023LossGuidedDM}. Various subsequent works follow this explanation \citep{Yu2023FreeDoMTE,boys2023tweedie,Rout2023BeyondFT,Chung2023PrompttuningLD,yang2024guidance}. However, recent theoretical study has revealed that this approximation has a large error lower bound \citep{yang2024guidance}. In this paper, we challenge the prevailing view by presenting a numerical examination of DPS in practical scenarios, particularly for 512×512 ImageNet images. Our analysis reveals that DPS aligns more closely with the principles of maximizing a posterior (MAP) rather than conditional score estimation.

From our empirical study, we make three major observations: 1). The conditional score estimation of DPS considerably diverges from the score of a properly trained conditional diffusion model and is even outperformed by an unconditional score; 2). The mean of DPS's conditional score estimate significantly deviates from zero, thus failing to qualify as a valid score; 3). The samples generated by DPS, although of high quality, exhibit markedly lower diversity. These findings collectively argue that DPS more closely aligns with the MAP framework than with conditional score estimation.

Given these insights, we propose enhancements to DPS to better align it with the concept of MAP. Our proposed modifications include: 1) Explicit maximization of the posterior through multi-step gradient ascent and projection; 2) The employment of a lightweight conditional score estimator trained with 100 images  and 8 GPU hours. Extensive experimental evaluations demonstrate that these improvements notably boost the performance of DPS.

Our technical contributions can be summarized as follows:
\begin{itemize}
    \item (Section~\ref{sec:map}) We demonstrate that DPS aligns more closely with the principles of maximizing a posterior (MAP) than the conditional score estimation, by showing it exhibits significant score estimation errors, a high score mean, and low sample diversity.
    \item (Section~\ref{sec:imp}) We introduced a multi-step gradient ascent algorithm to explicitly maximize the posterior and a lightweight conditional score estimator trained with 100 images and 8 GPU hours, and both of them significantly boost DPS's performance.
    \item (Section~\ref{sec:exp}) Our extensive experimental results substantiate the effectiveness of the proposed enhancements, significantly improving the performance metrics of DPS.
\end{itemize}


\section{Preliminaries}
\textbf{Denoising Diffusion Probability Model}
Diffusion models represent an important class of generative models, which utilizes a $T$-step Gaussian Markov chain \citep{sohl2015deep}. 
One of the most widely adopted diffusion models is the Denoising Diffusion Probability Model (DDPM) \citep{ho2020denoising}. We denote the source image as $X_0$, and the forward Markov chain of DDPM is:
\begin{gather}
    q(X_T,...,X_1|X_0) = \prod_{t=1}^T q(X_t|X_{t-1}) \textrm{, where } q(X_t|X_{t-1}) = 
\mathcal{N}(\sqrt{1-\beta_t}X_{t-1}, \beta_t I),\label{eq:dfw}
\end{gather}
where $\beta_t$ are hyperparameters. The reverse process of DDPM is a Markov chain with the transition $p_{\theta}(X_{t-1}|X_t)$, with the score function $\nabla_{X_t} \log p(X_t)$ approximated by a neural network $s_{\theta}(X_t, t)$:
\begin{gather}
    p_{\theta}(X_0,...,X_T) = p(X_T)\prod_{t=1}^T p_{\theta}(X_{t-1}|X_t),\notag \\  \textrm{where } p_{\theta}(X_{t-1}|X_t) = \mathcal{N}(\frac{1}{\sqrt{\alpha_t}}(X_t + \beta_t s_{\theta}(X_t,t)),\sigma^2_t I) \label{eq:ddpm2},
\end{gather}
and $\alpha_t,\sigma_t^2$ are parameters determined by $\beta_t$ \citep{ho2020denoising}. 

\textbf{Diffusion Posterior Sampling}
On the other hand, Diffusion Posterior Sampling (DPS) \citep{Chung2022DiffusionPS} extends DDPM by enabling conditional sampling. Given an operator $f(.)$ and an observation $y = f(X_0')$ from some unknown $X_0'$, DPS can approximately sample from the posterior $p_{\theta}(X_0|y)$, utilizing the pre-trained DDPM model $p_{\theta}(X_0)$.

Specifically, For each DDPM step in Eq.~\ref{eq:ddpm2}, DPS includes an additional offset term that penalizes the distance between the transformed posterior mean and the measurement. More specifically, after $X_{t-1}$ is obtained from DDPM, DPS updates it additionally with:
\begin{gather}
    X_{t-1} = X_{t-1} - \zeta_t \nabla_{X_t}
    \|f(\mathbb{E}[X_0|X_t]) - y\|,
\end{gather}
where $\zeta_t$ are hyperparameters and $\mathbb{E}[X_0|X_t]$ is the posterior mean estimated by Tweedie's formula:
\begin{gather}
    \mathbb{E}[X_0|X_t] = \frac{1}{\sqrt{\bar{\alpha}_t}}(X_t + (1-\bar{\alpha}_t)s_{\theta}(X_t,t)).
\end{gather}
In this paper, we adopt the notation of DPS in pixel space. For DPS in latent space, see Appendix.~\ref{app:pf}.

\section{DPS is Closer to Maximizing a Posterior}
\label{sec:map}
\subsection{DPS as Conditional Score Estimator}
A key theoretical justification for DPS is that it can be interpreted as a conditional score estimator. \citet{Chung2022DiffusionPS} and \citet{ Song2023LossGuidedDM} argue that to sample from the posterior $p_{\theta}(X_0|y)$, one can integrate an estimate of the conditional score $\nabla_{X_t}\log p_{\theta}(X_t|y)$ into the DDPM update in Eq.~\ref{eq:ddpm2} and replace the unconditional score model $s_{\theta}(X_t, t)$. Then, to sample from $p_{\theta}(X_0|y)$, we can perform ancestral sampling through a new Markov chain with the transition distribution:
\begin{gather}
    p_{\theta}(X_{t-1}|X_t,y) = \mathcal{N}(\frac{1}{\sqrt{\alpha_t}}(X_t + \beta_t \nabla_{X_{t}}\log p_{\theta}(X_t|y)),\sigma^2_t I).\label{eq:posterior}
\end{gather}
To estimate the conditional score $\nabla_{X_t}\log p_{\theta}(X_t|y)$, \citet{Chung2022DiffusionPS} and \citet{Song2023LossGuidedDM} show that it can be decomposed into the unconditional score model and a likelihood term:
\begin{gather}
    \nabla_{X_t}\log p_{\theta}(X_t|y) \approx s_{\theta}(X_t,t) + \nabla_{X_t}\log p_{\theta}(y|X_t).
\end{gather}
As $Y - X_0 - X_t$ forms a Markov chain, the term $p_{\theta}(y|X_t)$ can be estimated via Monte Carlo:
\begin{gather}
    p_{\theta}(y|X_t) = \mathbb{E}_{p_{\theta}(X_0|X_t)}[p(y|X_0)]
    \approx \frac{1}{K}\sum_{x_0^i \sim p_{\theta}(X_0|X_t)}^{i=1,...,K} p(y|x_0^i) \label{eq:mc},
\end{gather}
where $p(y|X_0) \propto \exp{(-\Delta(f(X_0),y))}$ and $\Delta(.,.)$ is some distance metric. When the posterior sample $x_0^i$ is approximated by the posterior mean $\mathbb{E}[X_0|X_t]$, and the distance $\Delta(.,.)$ is the $l^2$ norm weighted by $\zeta_t$, the above Monte Carlo estimation becomes DPS \citep{Chung2022DiffusionPS}:
\begin{gather}
    p_{\theta}(y|X_t) \approx p(y|X_0=\mathbb{E}[X_0|X_t]) = \exp{(-\zeta_t\|f(\mathbb{E}[X_0|X_t])-y\|)}.\label{eq:likeli}
\end{gather}

Many subsequent works follow this theory to explain why DPS works \citep{Chung2022DiffusionPS,Song2023LossGuidedDM,Yu2023FreeDoMTE,boys2023tweedie,Rout2023BeyondFT,zhang2024improving,Chung2023PrompttuningLD,yang2024guidance}. \citet{Chung2022DiffusionPS} and \citet{Song2023LossGuidedDM} support this explanation by showing that the approximation error to $p(y|X_t)$ is upper-bounded. However, \citet{yang2024guidance} challenge this theory by demonstrating that the approximation error has a large lower bound.

To verify whether DPS approximates the conditional score, we first represent the conditional score estimation of DPS using the unconditional score and the $l^2$ norm:
\begin{proposition}
DPS is equivalent to DDPM with estimated conditional score:
\begin{gather}
    s_{DPS}(X_t,t,y) = s_{\theta}(X_t,t) - \frac{\sqrt{\alpha_t}}{\beta_t} \zeta_t \nabla_{X_t}
    \|f(\mathbb{E}[X_0|X_t])-y\|.\label{eq:dpsscore}
\end{gather}
\end{proposition}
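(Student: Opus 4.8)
The plan is to establish the identity by matching the two Gaussian reverse-transition kernels: the DPS step on one hand, and the conditional-score DDPM kernel of Eq.~\ref{eq:posterior} on the other. The key observation is that the extra DPS offset $\zeta_t \nabla_{X_t}\|f(\mathbb{E}[X_0|X_t])-y\|$ is a deterministic function of $X_t$ alone, so it merely translates the mean of the reverse transition while leaving the covariance $\sigma_t^2 I$ unchanged. Hence one DPS step is itself Gaussian with covariance $\sigma_t^2 I$, and proving its equivalence to the conditional-score kernel reduces to showing the two means coincide.

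First I would write the mean of a single DPS step explicitly. Starting from the DDPM mean $\frac{1}{\sqrt{\alpha_t}}(X_t + \beta_t s_\theta(X_t,t))$ of Eq.~\ref{eq:ddpm2} and adding the offset that DPS applies to $X_{t-1}$, the DPS reverse transition is $\mathcal{N}(\mu_{DPS}, \sigma_t^2 I)$ with
\[
  \mu_{DPS} = \frac{1}{\sqrt{\alpha_t}}\big(X_t + \beta_t s_\theta(X_t,t)\big) - \zeta_t \nabla_{X_t}\|f(\mathbb{E}[X_0|X_t])-y\|.
\]

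Next I would write the mean of the target kernel of Eq.~\ref{eq:posterior}, replacing the true conditional score $\nabla_{X_t}\log p_\theta(X_t|y)$ by the unknown estimator $s_{DPS}(X_t,t,y)$, which gives $\frac{1}{\sqrt{\alpha_t}}(X_t + \beta_t s_{DPS}(X_t,t,y))$. Setting this equal to $\mu_{DPS}$, cancelling the shared $X_t/\sqrt{\alpha_t}$, and multiplying through by $\sqrt{\alpha_t}/\beta_t$ isolates $s_{DPS}$ and delivers Eq.~\ref{eq:dpsscore} immediately.

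There is no real analytic difficulty; the statement is a one-line linear rearrangement once both kernels are written down. The only point that warrants care is the bookkeeping of the gradient: the offset is differentiated with respect to $X_t$ even though it is added to $X_{t-1}$, so I would state explicitly that this term is held fixed given $X_t$ (it does not depend on the freshly drawn Gaussian noise) when identifying it as a mean shift. I would also emphasize that the derivation matches the per-step reverse transitions, so ``equivalent'' is to be read as the two Markov chains sharing the same one-step kernel $p_\theta(X_{t-1}|X_t,y)$ with $s_{DPS}$ playing the role of the conditional score.
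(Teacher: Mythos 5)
Your proposal is correct and is essentially the paper's own argument run in the opposite direction: the paper substitutes the claimed $s_{DPS}$ into the DDPM kernel of Eq.~\ref{eq:ddpm2} and observes the result is the DDPM Gaussian shifted by the DPS offset, whereas you equate the two Gaussian means and solve for $s_{DPS}$; both reduce to the same one-line mean-matching. Your explicit remarks that the offset is deterministic given $X_t$ (hence a pure mean shift) and that the gradient is taken with respect to $X_t$ are sound and, if anything, slightly more careful than the paper's verification.
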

In the following sections, we will examine $s_{DPS}(X_t, t, y)$ in practical scenarios, specifically for 512$\times$512 images, to determine whether it is a good estimation of the conditional score.
\subsection{Observation I: DPS Has Large Score Error}

Our first observation is that the conditional score estimation of DPS has a large error, even when compared with the unconditional score. Furthermore, when tuning the hyperparameter $\zeta_t$, a larger score error often results in better image quality.

More specifically, we consider the scenario where $f(.)$ is $\times 8$ bicubic down-sampling. We use the score of StableSR \citep{wang2024exploiting}, a state-of-the-art Stable Diffusion-based super-resolution method, as the baseline (See details in Appendix.~\ref{app:setup}). We observe that starting from the score of StableSR, the distance to the score of DPS is significantly larger than the distance to the unconditional score. Denote the score of DPS as $s_{DPS}(X_t,t,y)$, the score of StableSR as $s_{\theta}(X_t,t,y)$ and the unconditional score as $s_{\theta}(X_t,t)$, we empirically observe:
\begin{gather}
    \|s_{DPS}(X_t,t,y) - s_{\theta}(X_t,t,y)\| 
    \gg \|s_{\theta}(X_t,t) - s_{\theta}(X_t,t,y)\|.
\end{gather}
In Figure \ref{fig:se}, we show the score error of different methods compared to StableSR. The results indicate that only when $\zeta_t=0.05$ and $t \ge 500$ is the score error of DPS marginally smaller than that of the unconditional score. However, when $\zeta_t=0.05$, DPS does not work well (See Figure \ref{fig:dpszeta} and Table \ref{tab:dpszeta}). For all other $\zeta_t$, the score error of DPS is significantly larger than that of the unconditional score. Additionally, Table \ref{tab:dpszeta} shows that as $\zeta_t$ decreases, the image quality of DPS deteriorates. This leads to an unexpected phenomenon: for DPS, a larger score error correlates with better image quality.

To verify that StableSR is a reliable baseline for conditional score, we train a super-resolution ControlNet \citep{Zhang2023AddingCC}, which uses a different neural network. However, as shown in Figure \ref{fig:se} the score error of the ControlNet is significantly smaller than unconditional score. Besides, StableSR outperforms DPS in FID (Fréchet inception distance) \citep{Heusel2017GANsTB}, KID (Kernel Inception Distance) \citep{Binkowski2018DemystifyingMG} and LPIPS (Learned Perceptual Image Patch Similarity) \citep{Zhang2018TheUE}, confirming that it is a reliable estimate of the conditional score.

\begin{minipage}[t]{0.9\linewidth}
\begin{minipage}[t]{0.5\linewidth}
\vspace{0pt}
\centering
\includegraphics[width=\linewidth]{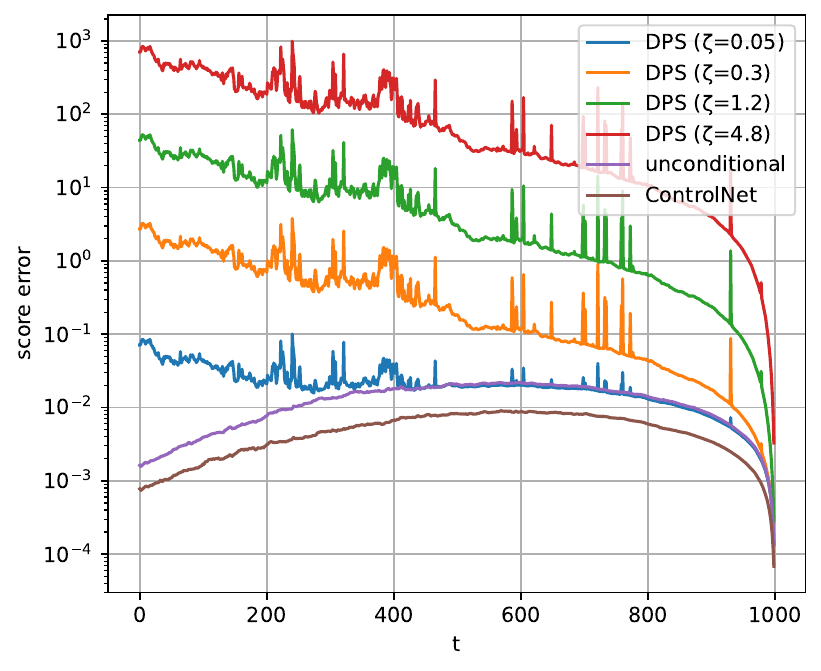}
\captionof{figure}{The score error to StableSR. DPS has a larger error than unconditional score.}
\label{fig:se}
\end{minipage}
\hfill
\begin{minipage}[t]{0.5\linewidth}
\vspace{0pt}
\centering
\includegraphics[width=\linewidth]{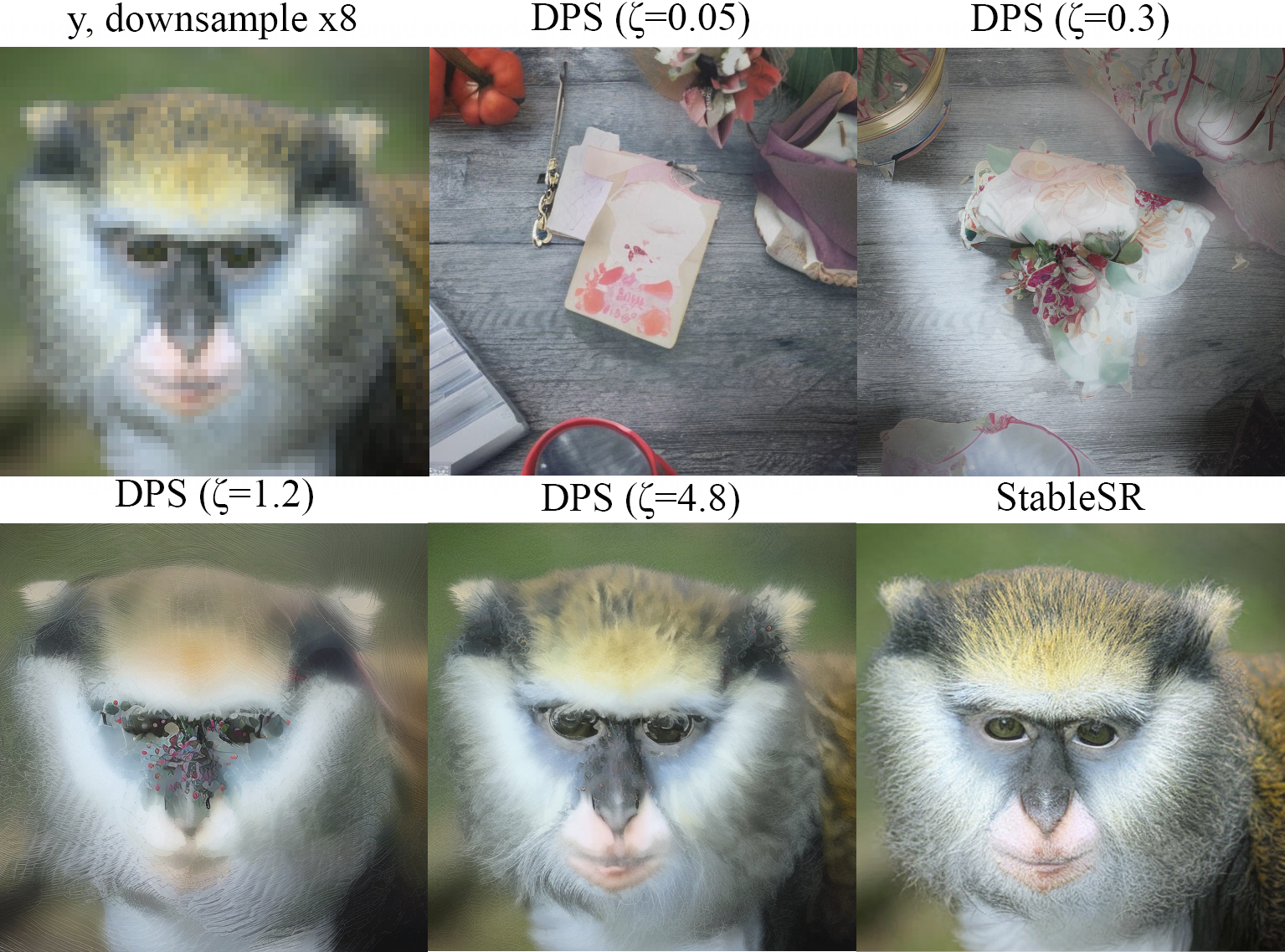}
\captionof{figure}{Qualitative results of different approaches for SR$\times8$. DPS only works with $\zeta_t=4.8$.}
\label{fig:dpszeta}
\end{minipage}
\hfill
\end{minipage}
\vspace{-5pt}
\begin{center}   
\begin{minipage}[t]{0.8\linewidth}
\begin{minipage}[t]{0.515\linewidth}
\centering
\vspace{0pt}
\captionof{table}{Quantitative results for SR$\times8$. DPS only works with $\zeta_t=4.8$.}
\label{tab:dpszeta}
\resizebox{\linewidth}{!}{
\begin{tabular}{@{}lccc@{}}
\toprule
                    & LPIPS & FID & KID \\ \midrule
DPS ($\zeta_t=0.05$) &  0.7833 & 131.0 & 62.1e-3 \\
DPS ($\zeta_t=0.3$) & 0.7349 & 150.5 & 87.6e-3 \\
DPS ($\zeta_t=1.2$) & 0.6056 & 133.0 & 86.3e-3  \\
DPS ($\zeta_t=4.8$) & 0.4137 & 58.48 & 14.6e-3 \\
ControlNet & 0.4657 & 67.38 & 19.3e-3  \\
StableSR            & 0.2855 & 29.12 & 0.9e-3 \\ \bottomrule
\end{tabular}
}
\end{minipage}\hfill
\begin{minipage}[t]{0.488\linewidth}
\centering
\vspace{0pt}
\captionof{table}{The mean of score for SR$\times8$. The mean of DPS ($\zeta_t=4.8$) is large.}
\label{tab:scoremean}
\resizebox{\linewidth}{!}{
\begin{tabular}{@{}lc@{}}
\toprule
         & Score's Mean (k=1000) \\ \midrule
DPS ($\zeta_t=0.05$)      & 0.4058 \\
DPS ($\zeta_t=0.3$)     & 0.5078 \\
DPS ($\zeta_t=1.2$)     & 1.4713 \\
DPS ($\zeta_t=4.8$)     & 5.8568 \\
Unconditional     & 0.4026 \\
StableSR & 0.3939  \\ \bottomrule
\end{tabular}
}
\end{minipage}\hfill
\end{minipage}
\end{center}
\subsection{Observation II: DPS Has Large Score Mean}
Our second observation is that the conditional score estimation of DPS has a larger non-zero mean. Additionally, when tuning the hyperparameter $\zeta_t$, we find that a larger score mean correlates with better image quality. More specifically, we empirically observe:
\begin{gather}
    |\mathbb{E}[s_{DPS}(X_t,t,y)]| \gg |\mathbb{E}[s_{\theta}(X_t,t)]| \approx |\mathbb{E}[s_{\theta}(X_t,t,y)]| > 0
\end{gather}
We examine the mean of the score function as any valid score function has zero mean. This zero-mean property is widely adopted in reinforcement learning and gradient estimators \citep{williams1992simple,mnih2016variational, mohamed2020monte}:
\begin{gather}
\mathbb{E}[\nabla_X \log p(X)] = \int p(X)\nabla_X \log p(X) dX
= \int \nabla_X p(X) dX = \nabla_X \int p(X) dX = 0.
\end{gather}
In Table \ref{tab:scoremean}, we present the absolute value of the empirical mean of the score for different methods. Each score is computed at the initial step $X_T$ for 1000 samples. The results show that the unconditional score and StableSR have score means close to 0.4. For DPS with $\zeta_t = 0.05$ and $\zeta_t = 0.3$, the score mean is also close to 0.4, but DPS does not perform well for small $\zeta_t$. In contrast, for DPS with good practical performance ($\zeta_t = 4.8$), the score mean is close to 5.8, which is far from zero.

\subsection{Observation III: DPS Has Low Sample Diversity}
From these observations, we can conclude that DPS is not estimating the conditional score accurately. So why does DPS perform well in practice? To shed light on this, we additionally show that the sample diversity of DPS is much lower than that of well-trained conditional diffusion models such as StableSR. Denote the sample variance of DPS as $\mathbb{V}[X_0^{DPS}|y]$, and the sample variance of StableSR as $\mathbb{V}[X_0|y]$, we empirically observe:
\begin{gather}
    \mathbb{V}[X_0^{DPS}|y] \ll \mathbb{V}[X_0|y].
\end{gather}

We draw $k=50$ samples from DPS and StableSR respectively and compute the per-pixel standard deviation. The results are shown in Figure \ref{fig:var} and Table \ref{tab:pixstd}. We observe that DPS with $\zeta_t=4.8$ has a much lower standard deviation than StableSR. In Figure \ref{fig:vvar}, we show that the samples from StableSR contain more visual variations than DPS, which is also observed by \citet{Cohen2023FromPS}.
\begin{center}    
\begin{minipage}[t]{0.8\linewidth}
\begin{minipage}[t]{0.6\linewidth}
\vspace{0pt}
\centering
\includegraphics[width=\linewidth]{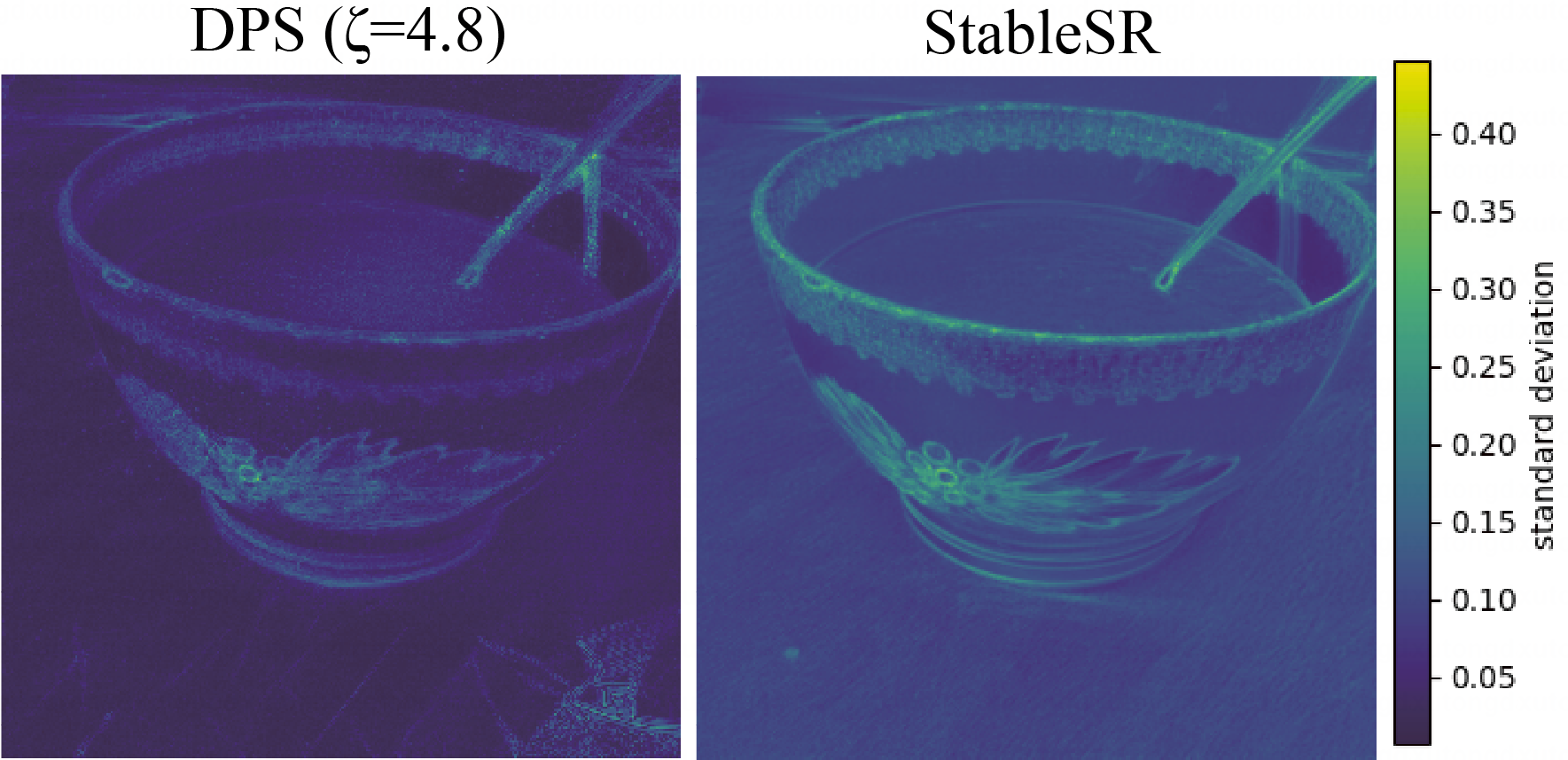}
\captionof{figure}{The per-pixel standard deviation of DPS is much lower than that of StableSR.}
\label{fig:var}
\end{minipage}
\hfill
\begin{minipage}[t]{0.39\linewidth}
\centering
\vspace{0pt}
\captionof{table}{The average per-pixel standard deviation of DPS is much lower than that of StableSR.}
\label{tab:pixstd}
\resizebox{\linewidth}{!}{
\begin{tabular}{@{}lc@{}}
\toprule
         & Pixel's STD. (k=50) \\ \midrule
DPS ($\zeta_t=4.8$)     & 0.0453 \\
StableSR & 0.3939  \\ \bottomrule
\end{tabular}
}
\end{minipage}\hfill
\end{minipage}
\end{center}
\vspace{-5pt}
\begin{figure*}[htb]
\centering
    \includegraphics[width=\linewidth]{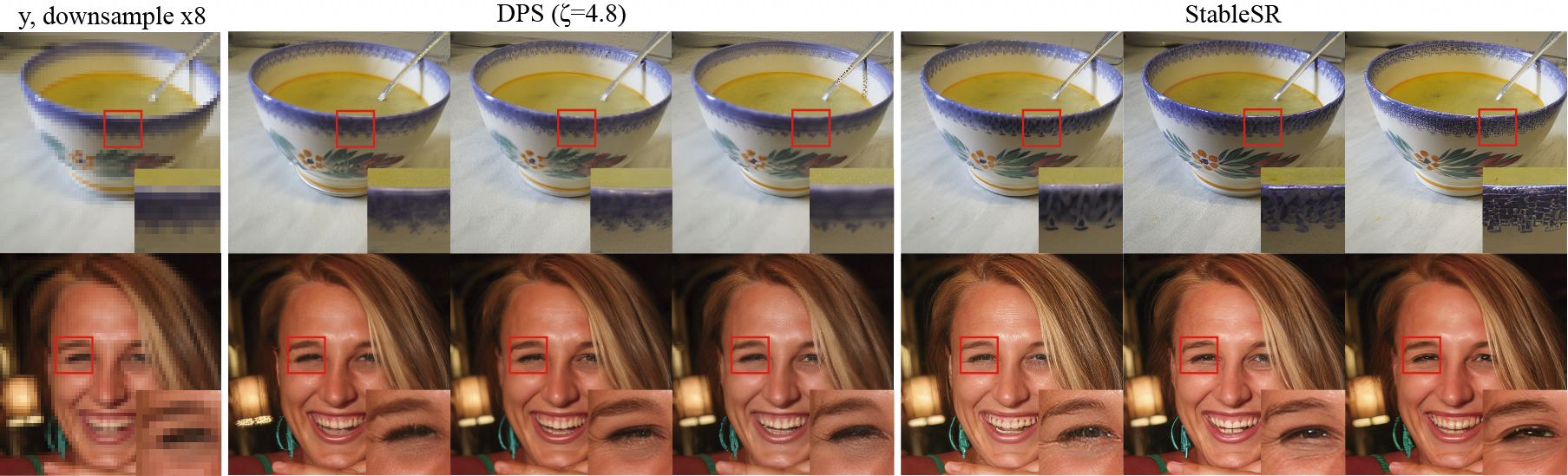}
\caption{The image samples from DPS has much lower visual diversity than that of StableSR.}
\label{fig:vvar}
\end{figure*}

\subsection{Hypothesis: DPS Aligns More Closely to Maximizing a Posterior}
With Observation I \& II, one can conclude that DPS is not an effective conditional score estimator for image restoration. However, DPS does work well and produces high-quality samples, despite being a poor conditional score estimator. Along with Observation III that DPS has low sample diversity, we hypothesize that DPS is closer to another paradigm of image restoration other than posterior sampling: the maximization of a posterior (MAP) estimate.
\begin{hypothesis}   
\label{ass:map}
Instead of approximating posterior sampling
    \begin{gather}
        X_{t-1}\sim p_{\theta}(X_{t-1}|X_t,y),
    \end{gather}
DPS is in fact attempting to maximizing a posterior
    \begin{gather}
        X_{t-1} \leftarrow \arg \max p_{\theta}(X_{t-1}|X_t,y). \label{eq:DMAP}
    \end{gather}
\end{hypothesis}
The MAP hypothesis is the simplest one that can explain all our observations:
\begin{itemize}
    \item Observation I \& II: Since DPS is not estimating the conditional score, its estimated score might have a large error and mean.
    \item Observation III: As MAP estimators produce deterministic samples, the low sample diversity of DPS is also explained.
\end{itemize}
Besides, the MAP hypothesis also explains previous works that are not explainable with conditional score estimation theory, such as why DPS has a high score error lower-bound \citep{yang2024guidance} and why Adam helps DPS \citep{He2023FastAS,Chung2023PrompttuningLD} (See Appendix~\ref{app:disc2}). 


\section{Improving DPS with MAP Hypothesis}
\label{sec:imp}
\subsection{Improvement I: Explicit MAP Implementation}
In light of the MAP hypothesis,
we modify DPS to explicitly maximize the posterior. In order to achieve this, we first reformulate the MAP hypothesis into a constrained optimization:
\begin{proposition}
    \label{prop:sopt}
    MAP in Eq.~\ref{eq:DMAP} is equivalent to the following in probability as $d\rightarrow \infty$:
\begin{gather}
    X_{t-1} \leftarrow \arg \max \log p_{\theta}(y|X_{t-1}) \textrm{, s.t. } X_{t-1}\sim p_{\theta}(X_{t-1}|X_t). \label{eq:dpsopt}
\end{gather}
\end{proposition}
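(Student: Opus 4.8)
The plan is to reduce the MAP objective to a product of a likelihood and a prior term via Bayes' rule, and then argue that in high dimension the prior term acts as a support constraint rather than as a discriminating objective. First I would apply Bayes' theorem to the posterior transition, using the Markov structure $Y - X_0 - X_{t-1} - X_t$ (so that $y$ is conditionally independent of $X_t$ given $X_{t-1}$), to write
\[
\log p_{\theta}(X_{t-1}|X_t,y) = \log p_{\theta}(y|X_{t-1}) + \log p_{\theta}(X_{t-1}|X_t) - \log p_{\theta}(y|X_t).
\]
Since the last term does not depend on $X_{t-1}$, the MAP in Eq.~\ref{eq:DMAP} is exactly $\arg\max_{X_{t-1}}[\log p_{\theta}(y|X_{t-1}) + \log p_{\theta}(X_{t-1}|X_t)]$. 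The goal is then to show that, as $d\to\infty$, the prior term $\log p_{\theta}(X_{t-1}|X_t)$ enforces the constraint $X_{t-1}\sim p_{\theta}(X_{t-1}|X_t)$ rather than contributing to the value being maximized.

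Next I would exploit the Gaussian form of the prior transition, $p_{\theta}(X_{t-1}|X_t) = \mathcal{N}(\mu_t, \sigma_t^2 I)$ with $\mu_t = \tfrac{1}{\sqrt{\alpha_t}}(X_t + \beta_t s_{\theta}(X_t,t))$. Writing $X_{t-1} = \mu_t + \sigma_t \epsilon$, concentration of measure for the Gaussian (equivalently, $\chi^2_d$ concentration of $\|\epsilon\|^2$) shows that a sample from the prior satisfies $\|X_{t-1} - \mu_t\| = \sigma_t\sqrt{d}\,(1 + o_{\mathbb{P}}(1))$: the mass lodges in a thin shell, the typical set, over which the log-density $\log p_{\theta}(X_{t-1}|X_t)$ is asymptotically constant. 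This is the precise sense in which the constraint ``$X_{t-1}\sim p_{\theta}(X_{t-1}|X_t)$'' should be read, namely as membership in the typical set, which holds with probability tending to one.

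Then I would conclude by comparing the two optimizers restricted to a neighborhood of the typical set. There the prior log-density varies only at lower order than the likelihood log-density, so the contribution it makes to the first-order optimality condition is negligible as $d\to\infty$; hence $\arg\max_{X_{t-1}}[\log p_{\theta}(y|X_{t-1}) + \log p_{\theta}(X_{t-1}|X_t)]$ and $\arg\max\{\log p_{\theta}(y|X_{t-1}) : X_{t-1}\sim p_{\theta}(X_{t-1}|X_t)\}$ coincide with probability approaching one, which is the claimed equivalence in probability. The latter program is exactly the sample-then-ascend structure of the DPS update, closing the loop back to the algorithm.

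The main obstacle will be making ``equivalent in probability as $d\to\infty$'' rigorous, because the unconstrained Gaussian mode sits at the center $\mu_t$ while the typical set is the shell at radius $\sigma_t\sqrt{d}$; the argument must track the relative scaling of $\|\nabla_{X_{t-1}}\log p_{\theta}(y|X_{t-1})\|$ against the prior's radial stiffness $1/\sigma_t^2$, so that the likelihood-driven displacement is comparable to, but does not escape, the typical shell. I would handle this by bounding the likelihood gradient and invoking Gaussian concentration to control the fluctuation of the radius, thereby showing that any discrepancy between the two optimizers vanishes in probability.
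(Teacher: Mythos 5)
Your proposal follows essentially the same route as the paper's proof: Bayes' rule plus the Markov property $y-X_0-\cdots-X_{t-1}-X_t$ to reduce the objective to $\log p_{\theta}(y|X_{t-1}) + \log p_{\theta}(X_{t-1}|X_t)$, and then Gaussian concentration (the paper cites the AEP, Theorem 3.1.1 of Cover and Thomas, where you invoke $\chi^2_d$ concentration directly) to argue that the prior log-density is asymptotically constant on its typical set and hence degenerates into the constraint $X_{t-1}\sim p_{\theta}(X_{t-1}|X_t)$. Your closing discussion of the scaling of the likelihood gradient against the prior's radial stiffness is in fact more careful than the paper's own proof, which stops at the assertion that the log-likelihood is constant in probability.
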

To implement the constrained optimization in Eq.~\ref{eq:dpsopt}, we leverage an observation made in \citet{yang2024guidance}, that as dimension $d\rightarrow \infty$, the isotropic Gaussian distribution $p_{\theta}(X_{t-1}|X_t)$ concentrates to a sphere surface $\mathcal{S}(p_{\theta}(X_{t-1}|X_t))$, with radius $\sqrt{d}\sigma_t$ and center $\mathbb{E}[X_{t-1}|X_t]$. Then, we can adopt multiple steps of gradient ascent and project the result onto the sphere surface $\mathcal{S}(p_{\theta}(X_{t-1}|X_t))$ following \citet{Menon2020PULSESP} and \citet{ yang2024guidance}. The detailed implementation is shown in Algorithm~\ref{alg:max}. It has two differences from DPS: 1). We implement the maximization of $\log p_{\theta}(y|X_{t-1})$ with multiple steps of gradient ascent. 2). We project the result onto the sphere surface $\mathcal{S}(p_{\theta}(X_{t-1}|X_t))$ where $p_{\theta}(X_{t-1}|X_t)$ concentrates around. 

\resizebox{\linewidth}{!}{
\begin{minipage}[t]{0.53\textwidth} %
\vspace{0pt}
\IncMargin{1.0em}
\begin{algorithm}[H]
\DontPrintSemicolon
\caption{DPS}\label{alg:dps}
\textbf{input} $T,f(.),y,\zeta_t$\;
$\quad$ $x_T = \mathcal{N}(0,I)$\;
$\quad$\textbf{for} $t=T$ {\bfseries to} $1$ \textbf{do}\;
$\quad\quad$$x_{t-1} \sim p_{\theta}(X_{t-1}|x_t)$\;
$\quad\quad$$x_{t-1} = x_{t-1} - \zeta_t \nabla \|f(\mathbb{E}[X_0|x_t])-y\|$\;
$\quad$\textbf{return} $x_0$\;
\end{algorithm}
\end{minipage}
\begin{minipage}[t]{0.55\textwidth} %
\vspace{0pt}
\IncMargin{1.0em}
\begin{algorithm}[H]
\DontPrintSemicolon
\caption{DMAP}\label{alg:max}
\textbf{input} $T,K,f(.),y,\zeta_t$\;
$\quad$ $x_T = \mathcal{N}(0,I)$\;
$\quad$\textbf{for} $t=T$ {\bfseries to} $1$ \textbf{do}\;
$\quad\quad$$x_{t-1} \sim p_{\theta}(X_{t-1}|x_t),\mu_{t-1} = \mathbb{E}[X_{t-1}|x_t]$\;
$\quad$$\quad$\textbf{for} $j=1$ {\bfseries to} $K$ \textbf{do}\;
$\quad\quad$$\quad$$x_{t-1} = x_{t-1} - \zeta_t  \nabla \|f(\mathbb{E}[X_0|x_{t-1}])-y\|$\;
$\quad\quad$$\quad$$x_{t-1} = \mu_{t-1} + \sqrt{d}\sigma_t\frac{x_{t-1}-\mu_{t-1}}{\|x_{t-1}-\mu_{t-1}\|}$\;
$\quad$\textbf{return} $x_0$\;
\end{algorithm}
\end{minipage}
}
\begin{figure*}[htb]
\centering
\includegraphics[width=\linewidth]{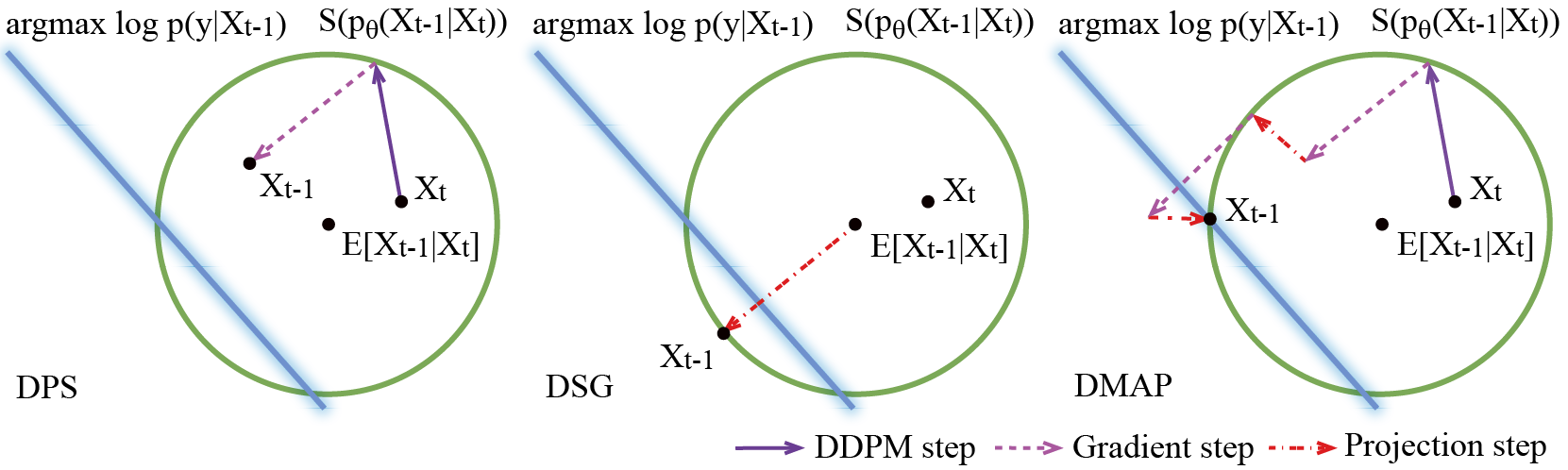}
\caption{Conceptual illustration of the update procedure of DPS, DSG and DMAP. The green circle $\mathcal{S}(p(X_{t-1}|X_t))$ is the sphere surface that $p(X_{t-1}|X_t)$ concentrates to. The blueline $\arg\max \log p(y|X_{t-1})$ is the manifold of $X_{t-1}$ that maximizes $\log p(y|X_{t-1})$.}
\label{fig:max}
\end{figure*}

As Algorithm \ref{alg:max} is a faithful implementation of the MAP in Eq.~\ref{eq:DMAP}, we name it Diffusion Maximize a Posterior (DMAP). To better understand why DMAP works, in Figure~\ref{fig:max}, we illustrate the update procedure of DMAP, and compare it with DPS \citep{Chung2022DiffusionPS} and DSG \citep{yang2024guidance}. It is shown that for DPS, the resulting $X_{t-1}$ might not live on $\mathcal{S}(p_{\theta}(X_{t-1}|X_t))$, and might not maximize $\log p_{\theta}(y|X_{t-1})$. For DSG, the resulting $X_{t-1}$ must live on $\mathcal{S}(p_{\theta}(X_{t-1}|X_t))$, but might not maximize $\log p_{\theta}(y|X_{t-1})$. While for DMAP, the resulting $X_{t-1}$ must live on $\mathcal{S}(p_{\theta}(X_{t-1}|X_t)) $, and can better achieve maximizing $\log p_{\theta}(y|X_{t-1})$ due to multi-step update. 

\subsection{Improvement II: A Light-weight Conditional Score Estimator}
Furthermore, the MAP hypothesis allows us to improve DPS with a reasonable but not-so-accurate conditional score estimator (CSE). Denote the backward transition distribution of CSE as $q_{\theta}(X_{t-1}|X_t, y)$, we can use it as the initialization point for solving the 
constrained optimization problem
(i.e., we use an estimated $q_{\theta}(X_{t-1}|X_t, y)$ to replace unconditional $p_{\theta}(X_{t-1}|X_t)$ in Eq.~\ref{eq:dpsopt}).
In other word, we iteratively optimize the following 
posterior using multi-step gradient ascent:
\begin{gather}
    X_{t-1} \leftarrow \arg \max \log p_{\theta}(y|X_{t-1}) \textrm{, with init. } X_{t-1}\sim q_{\theta}(X_{t-1}|X_t,y).
\end{gather}
To improve DPS, the approximate posterior $q_{\theta}(X_{t-1}|X_t, y)$ can be not-so-accurate. In fact, as long as the cross-entropy between the approximate posterior and the true posterior is smaller than the cross-entropy between the unconditional distribution and the true posterior, the above optimization problem has a better initialization than Eq.~\ref{eq:dpsopt} in expectation:
\begin{proposition}
Denote the cross entropy as $\mathcal{H}(.,.)$, then
\begin{gather}
    \mathcal{H}(q_{\theta}(X_{t-1}|X_t,y),p_{\theta}(X_{t-1}|X_t,y)) < \mathcal{H}(p_{\theta}(X_{t-1}|X_t),p_{\theta}(X_{t-1}|X_t,y)) \notag \\ \Rightarrow \mathbb{E}_{q_{\theta}(X_{t-1}|X_t,y)}[\log p_{\theta}(y|X_{t-1})] > \mathbb{E}_{p_{\theta}(X_{t-1}|X_t)}[\log p_{\theta}(y|X_{t-1})].
\end{gather}
\end{proposition}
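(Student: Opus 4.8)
The plan is to translate the cross-entropy hypothesis into an inequality between expectations of $\log p_{\theta}(X_{t-1}|X_t,y)$, expand the true posterior transition with Bayes' rule, and show that the leftover terms collapse into a non-negative KL divergence. First I would unfold the definition $\mathcal{H}(a,b)=-\mathbb{E}_a[\log b]$, so that the assumed inequality $\mathcal{H}(q_{\theta}(X_{t-1}|X_t,y),p_{\theta}(X_{t-1}|X_t,y)) < \mathcal{H}(p_{\theta}(X_{t-1}|X_t),p_{\theta}(X_{t-1}|X_t,y))$ becomes, after multiplying by $-1$ and flipping the sign,
$$\mathbb{E}_{q_{\theta}(X_{t-1}|X_t,y)}[\log p_{\theta}(X_{t-1}|X_t,y)] > \mathbb{E}_{p_{\theta}(X_{t-1}|X_t)}[\log p_{\theta}(X_{t-1}|X_t,y)].$$

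Next I would decompose the log-posterior. Because $Y-X_0-X_{t-1}-X_t$ is a Markov chain, conditioning on $X_{t-1}$ renders $y$ independent of $X_t$, giving $p_{\theta}(y|X_{t-1},X_t)=p_{\theta}(y|X_{t-1})$, and Bayes' rule then yields
$$\log p_{\theta}(X_{t-1}|X_t,y) = \log p_{\theta}(y|X_{t-1}) + \log p_{\theta}(X_{t-1}|X_t) - \log p_{\theta}(y|X_t).$$
The term $\log p_{\theta}(y|X_t)$ does not depend on $X_{t-1}$, so it is a constant under both expectations and cancels. Writing $p=p_{\theta}(X_{t-1}|X_t)$ and $q=q_{\theta}(X_{t-1}|X_t,y)$ for brevity, substituting this decomposition and rearranging isolates the quantity of interest and leaves a correction term:
$$\mathbb{E}_{q}[\log p_{\theta}(y|X_{t-1})] - \mathbb{E}_{p}[\log p_{\theta}(y|X_{t-1})] > \mathbb{E}_{p}[\log p] - \mathbb{E}_{q}[\log p].$$

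The crux, and the step I expect to be the main obstacle, is to certify that the right-hand correction term is non-negative, i.e. $\mathbb{E}_{p}[\log p]\ge\mathbb{E}_{q}[\log p]$, equivalently $\mathcal{H}(q,p)\ge H(p)$ where $H$ denotes differential entropy. This is false for an arbitrary $q$, so the argument must use the structure of the diffusion transitions: both the unconditional transition $p_{\theta}(X_{t-1}|X_t)$ and the CSE transition $q_{\theta}(X_{t-1}|X_t,y)$ are isotropic Gaussians with the \emph{same} covariance $\sigma_t^2 I$ (Eq.~\ref{eq:ddpm2} and Eq.~\ref{eq:posterior}), hence have identical differential entropy, $H(q)=H(p)$. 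Then $\mathcal{H}(q,p)=H(q)+\KL(q\|p)=H(p)+\KL(q\|p)\ge H(p)$ by non-negativity of the KL divergence, so the correction term is indeed $\ge 0$.

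Chaining the two inequalities gives $\mathbb{E}_{q}[\log p_{\theta}(y|X_{t-1})]-\mathbb{E}_{p}[\log p_{\theta}(y|X_{t-1})] > \KL(q\|p)\ge 0$, which is precisely the desired conclusion. The two points to handle carefully are that the equal-entropy step relies on the CSE and unconditional models sharing the variance schedule $\sigma_t^2$ (true by construction), and that $\log p_{\theta}(X_{t-1}|X_t,y)$ is interpreted through the genuine Bayesian posterior of the model $p_{\theta}$, so that the Bayes decomposition is exact rather than an approximation.
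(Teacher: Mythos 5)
Your proposal follows the same route as the paper's proof: unfold the cross-entropy hypothesis, use the Markov property $p_{\theta}(y|X_{t-1},X_t)=p_{\theta}(y|X_{t-1})$ and Bayes' rule to write $\log p_{\theta}(X_{t-1}|X_t,y)=\log p_{\theta}(y|X_{t-1})+\log p_{\theta}(X_{t-1}|X_t)-\log p_{\theta}(y|X_t)$, drop the constant, and then control the leftover term $\mathbb{E}_{p}[\log p]-\mathbb{E}_{q}[\log p]$. Where you differ is precisely where the paper is loose: the paper justifies $\mathbb{E}_{q}[-\log p]>\mathbb{E}_{p}[-\log p]$ by citing ``cross entropy is always larger than entropy,'' i.e.\ it reads the inequality as $\mathcal{H}(q,p)\ge H(p)$, whereas Gibbs' inequality actually gives $\mathcal{H}(q,p)\ge H(q)$; the form the paper invokes is false for arbitrary $q$ (your counterexample-level worry is well founded). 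Your patch --- both reverse transitions are Gaussians with the same covariance $\sigma_t^2 I$ by construction of the DDPM/CSE reverse kernels, so $H(q)=H(p)$ and $\mathcal{H}(q,p)=H(q)+\KL(q\|p)=H(p)+\KL(q\|p)\ge H(p)$ --- supplies exactly the missing ingredient and makes the step rigorous. So your argument is the same decomposition with the key inequality correctly certified, and as a bonus it yields the slightly stronger quantitative conclusion $\mathbb{E}_{q}[\log p_{\theta}(y|X_{t-1})]-\mathbb{E}_{p}[\log p_{\theta}(y|X_{t-1})]>\KL(q\|p)\ge 0$.
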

In practice, we adopt ControlNet \citep{Zhang2023AddingCC} as CSE. We are surprised to find that our CSE is very efficient in terms of both data and temporal complexity. Specifically, it only takes 8 GPU hours and 100 images to train, which is roughly equivalent to the time of running DPS for 100 images. For DPS evaluated with 1000 images, the additional training time of CSE is around $28$s for each image, while the DPS itself costs several minutes. Alternatively, we can also use fully synthetic images generated by Stable Diffusion, in which case our CSE does not require any new data.

\begin{table}[h]
\caption{Quantitative Results on ImageNet 512 Dataset. \textbf{Bold}: best. \underline{Underline}: second best. Our proposed approaches improve DPS significantly.}
\label{tab:result}
\resizebox{\linewidth}{!}{
\begin{tabular}{@{}lcccccccccc@{}}
\toprule
                              & \multirow{2}{*}{Time(s)$\downarrow$} & \multicolumn{3}{c}{SR$\times$8}                                                       & \multicolumn{3}{c}{Gaussian Deblur}                                            & \multicolumn{3}{c}{Non-linear Deblur}                                              \\ \cmidrule(lr){3-5} \cmidrule(lr){6-8} \cmidrule(lr){9-11}
                              &  & \multicolumn{1}{c}{PSNR$\uparrow$} & \multicolumn{1}{c}{LPIPS$\downarrow$} & \multicolumn{1}{c}{FID$\downarrow$} & \multicolumn{1}{c}{PSNR$\uparrow$} & \multicolumn{1}{c}{LPIPS$\downarrow$} & \multicolumn{1}{c}{FID$\downarrow$} & \multicolumn{1}{c}{PSNR$\uparrow$} & \multicolumn{1}{c}{LPIPS$\downarrow$} & \multicolumn{1}{c}{FID$\downarrow$} \\ \midrule
\multicolumn{3}{@{}l@{}}{\textit{Training Free}}  & & & & & & & \\
DPS & 162 & 22.33 & 0.4137 & 58.48 & 23.05 & 0.4267 & 59.31 & 23.34 & 0.4173 & 62.10 \\
PSLD & 279 & 22.28 & 0.4163 & 59.08 & 23.06 & 0.4305 & 60.73 & - & - & - \\
FreeDOM & 195 & 22.64 & 0.3961 & 52.94 & 23.33 & 0.4104 & 54.26 & 23.39 & 0.4023 & 58.98 \\ 
ReSample & 433 & 22.78 & 0.3949 & 52.18 & 23.50 & 0.4076 & 53.00 & 23.47 & 0.4026 & 59.24 \\ 
DSG & 165 & 23.15 & 0.3912 & 51.15 & 23.70 & 0.3977 & 52.01 & 23.41 & 0.3861 & 57.81 \\
DMAP (same) & 172 & \underline{23.37} & \underline{0.3770} & \underline{44.37} & \underline{24.42} & \underline{0.3752} & \underline{44.15} & \underline{24.41} & \underline{0.3437} & \underline{47.65} \\
DMAP (full) & 517 & \textbf{23.52} & \textbf{0.3494} & \textbf{39.56} & \textbf{24.86} & \textbf{0.3407} & \textbf{38.33} & \textbf{24.99} & \textbf{0.3135} & \textbf{39.93} \\ \midrule
\multicolumn{5}{@{}l@{}}{\textit{Conditional Score Estimator Trained with 8 GPU Hours}} & & & & & \\
CSE (n=100) & 36 + 28 & 16.54 & 0.4841 & 85.58 & 17.54 & 0.4042 & 58.89 & 16.20 & 0.5172 & 94.68 \\
CSE (n=1000) & 35 + 28 & 16.68 & 0.4657 & 67.38 & 17.18 & 0.3946 & 49.50 & 15.69 & 0.5028 & 77.52 \\
CSE (Self-gen) & 36 + 28 & 17.15 & 0.4794 & 82.32 & 18.78 & 0.3700 & 45.21 & 16.45 & 0.5213 & 106.2 \\ \midrule
\multicolumn{5}{@{}l@{}}{\textit{DPS + Conditional Score Estimator Trained with 8 GPU Hours}} & & & & & \\
DPS & 162 & 22.33 & 0.4137 & 58.48 & 23.05 & 0.4267 & 59.31 & 23.34 & 0.4173 & 62.10\\
DPS + CSE (n=100) & 186 + 28 & 22.98 & 0.3229 & 44.59 & 25.20 & 0.2306 & 23.39 & \underline{24.08} & \underline{0.3351} & \underline{40.25} \\
DPS + CSE (n=1000) & 184 + 28 & \textbf{23.35} & \underline{0.3142} & \textbf{36.23} & \textbf{25.47} & \textbf{0.2150} & \textbf{18.96} & \textbf{24.17} & 0.3384 & \textbf{36.90} \\
DPS + CSE (Self-gen) & 192 + 28 & \underline{23.09} & \textbf{0.3082} & \underline{38.60} & \underline{25.41} & \underline{0.2222} & \underline{20.88} & 24.01 & \textbf{0.3250} & 40.64 \\ \midrule
\multicolumn{5}{@{}l@{}}{\textit{DMAP + Conditional Score Estimator Trained with 8 GPU Hours}} & & & & & \\
DMAP (full) & 517 & 23.52 & 0.3494 & 39.56 & 24.86 & 0.3407 & 38.33 & 24.99 & 0.3135 & 39.93 \\
DMAP + CSE (n=100) & 603 + 28 & \underline{23.58} & 0.3357 & 39.56 & 26.17 & 0.2580 & 22.66 & 25.19 & \underline{0.3048} & 33.08 \\
DMAP + CSE (n=1000)& 602 + 28 & \textbf{23.71} & \underline{0.3254} & \textbf{36.64} & \textbf{26.39} & \underline{0.2550} & \textbf{20.49} & \textbf{25.27} & 0.3114 & \underline{32.97} \\ 
DMAP + CSE (Self-gen) & 602 + 28 & 23.33 & \textbf{0.3107} & \underline{38.29} & \underline{26.34} & \textbf{0.2438} & \underline{21.17} & \underline{25.19} & \textbf{0.2776} & \textbf{32.04} \\ 
\bottomrule
\end{tabular}
}
\end{table}

\section{Experimental Results}
\label{sec:exp}
\subsection{Experimental Setup}
\textbf{Dataset \& Diffusion Model} We utilize the first 1000 images from the ImageNet validation split. All images are resized and center-cropped to $512^2$ pixels. We use Stable Diffusion 2.0 as the base diffusion model and employ a 500-step ancestral sampling solver (\textit{i.e.}, DDPM) to align with our assumptions. There are some works using ancestral sampling solvers \citep{Yu2023FreeDoMTE,yang2024guidance} while some other works \citep{Chung2023PrompttuningLD,Rout2023BeyondFT} using PF-ODE solvers (\textit{i.e.}, DDIM). We discuss the impact of base diffusion models, solvers and detailed setup in Appendix.~\ref{app:abl2}.

\textbf{Operators \& Metrics} For the operator $f(.)$, we adopt $\times 8$ downsampling, Gaussian blurring with a kernel size 61 and an intensity 3.0, and non-linear blurring as described by \citet{Chung2022DiffusionPS}. Following previous works \citep{Chung2022DiffusionPS}, we utilize PSNR (Peak Signal-to-Noise Ratio), LPIPS, and FID to evaluate the performance of all methods.

\textbf{Baselines} We compare our two improvements against several DPS algorithms in the latent space. These include DPS \citep{Chung2022DiffusionPS} implemented in latent space as described in \citet{rout2024solving}, PSLD \citep{rout2024solving}, ReSample \citep{song2023solving}, FreeDOM \citep{Yu2023FreeDoMTE}, and DSG \citep{yang2024guidance}. We acknowledge the existence of other very competitive works \citep{Song2023LossGuidedDM,Rout2023BeyondFT,Chung2023PrompttuningLD,mardani2023variational,song2023pseudoinverse}. However, these approaches are either not open-sourced or have not been evaluated in the latent space, and hence, are not included in our comparison.

\textbf{Conditional Score Estimator} The Conditional Score Estimator (CSE) utilizes a ControlNet neural network and is trained for 5000 steps with a batch size of 64, taking approximately 8 hours on one A100 GPU. We train the CSE using various datasets, including subsets of 100 and 1000 images from the ImageNet training set, as well as self-generated data from Stable Diffusion 2.0. For detailed information about the training setup and datasets, please refer to Appendix.~\ref{app:setup}.
\begin{figure*}[thb]
\centering
\includegraphics[width=\linewidth]{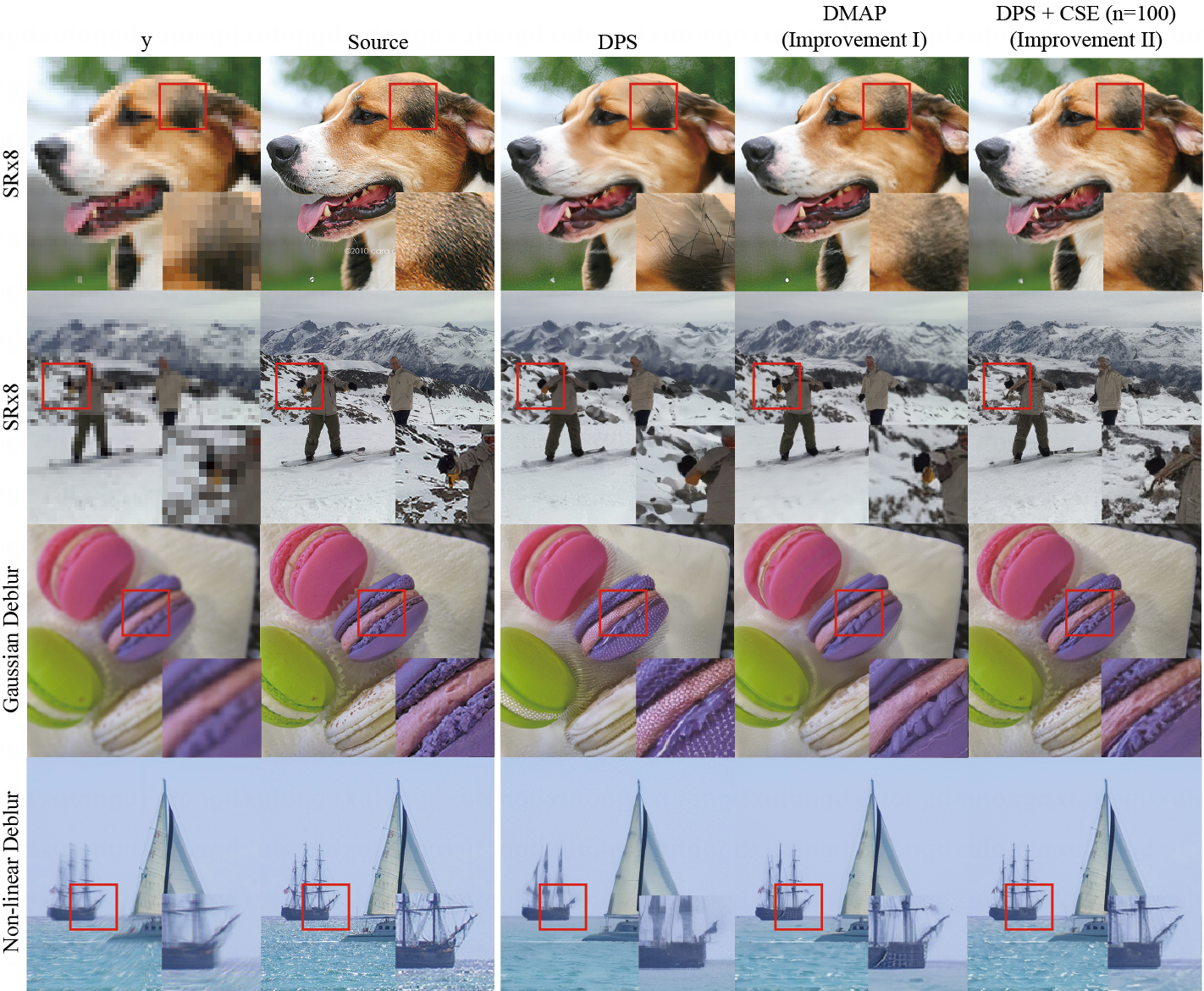}
\caption{Qualitative results on $512\times 512$ ImageNet images. Our proposed approaches improve DPS significantly.}
\label{fig:qual}
\end{figure*}
\subsection{Main Results}
\textbf{Improvement I} We evaluate DMAP in two different settings: \say{same complexity} and \say{best performance}, to test DMAP as a practical algorithm and verify our MAP hypothesis. In the \say{same complexity} setting, we set the gradient ascent steps of DMAP to $K=2$ and reduce the diffusion steps by 2. This results in DMAP (same), which has almost the same runtime as DPS. In the \say{best performance} setting, we increase the gradient ascent steps of DMAP to $K=3$ without reducing the diffusion steps. This results in DMAP (full), which is significantly slower than DPS. As shown in Table~\ref{tab:result} and Figure~\ref{fig:qual},~\ref{fig:qual_01}-\ref{fig:qual_03}, DMAP (same) outperforms all other methods across most metrics while maintaining almost identical complexity to DPS. Furthermore, DMAP (full) outperforms all other methods by a large margin, albeit at the cost of increased complexity. Therefore, we conclude that DMAP is an efficient and practical algorithm, and our MAP hypothesis is reasonable.

\textbf{Improvement II} We evaluate our Improvement II across three different dataset settings. Specifically, we train a Conditional Score Estimator (CSE) using 100 and 1000 images from ImageNet, as well as self-generated images from Stable Diffusion 2.0 itself. Each training session takes approximately 8 GPU hours. As shown in Table~\ref{tab:result} and Figure~\ref{fig:qual},~\ref{fig:qual_01}-\ref{fig:qual_03}, the CSE on its own does not provide substantial results. However, when integrated with DPS and DMAP algorithms, the CSE significantly enhances their performance. Besides, the additional complexity of CSE is also marginal. More specifically, the training cost amortized by 1000 images is $28$s, which is much smaller than the $162$s of DPS itself. Overall, DPS + CSE is only $25$\% slower than DPS, including the training time.

\begin{figure*}[thb]
\centering
\includegraphics[width=\linewidth]{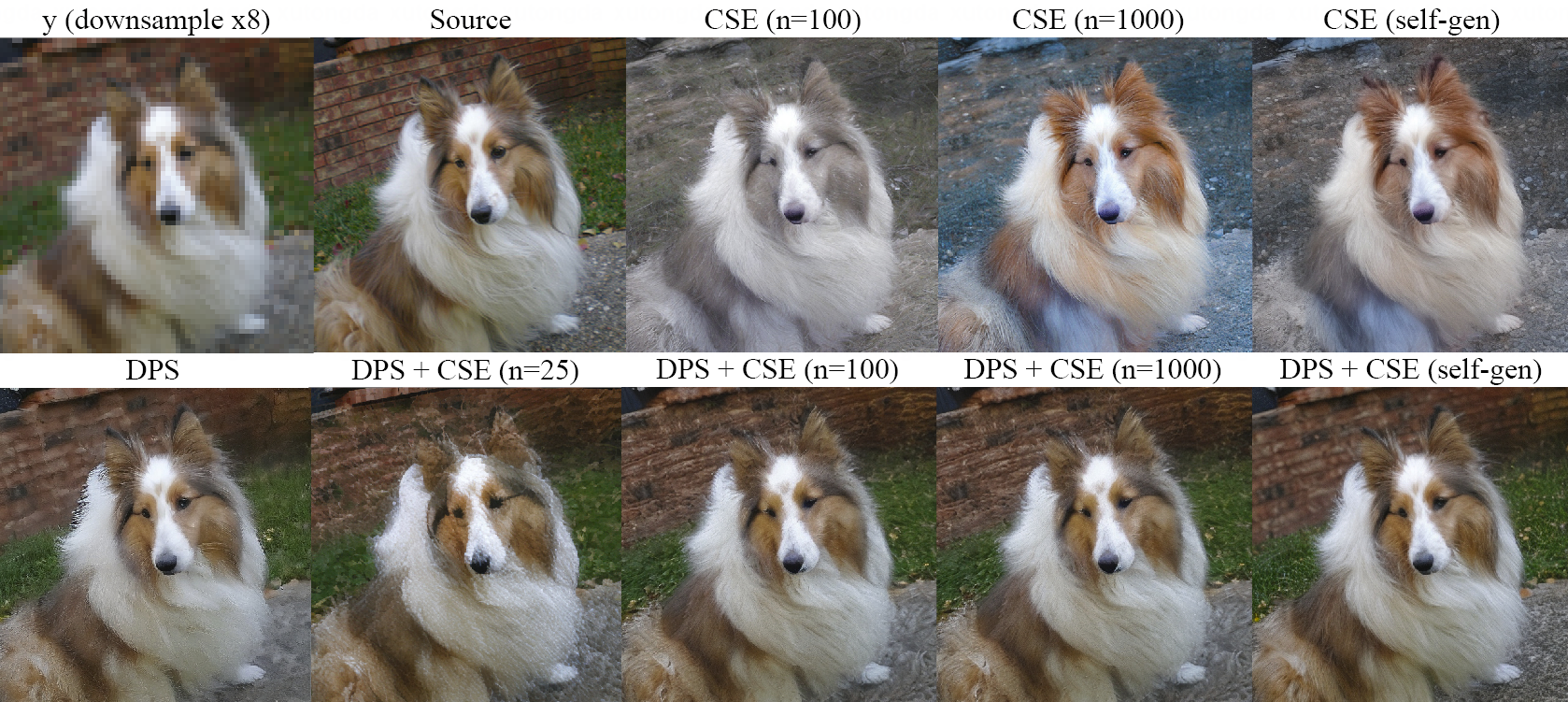}
\caption{Effect of CSE and number of training images. CSE does not work well without DPS.}
\label{fig:nqual}
\end{figure*}
\begin{minipage}[t]{\linewidth}
\begin{minipage}[t]{0.52\linewidth}
\centering
\vspace{0pt}
\captionof{table}{Impact of steps parameter $T,K$.}
\label{tab:abltk}
\resizebox{\linewidth}{!}{
\begin{tabular}{@{}lcccccc@{}}
\toprule
                         & T & K & Time(s) & PSNR & LPIPS & FID \\ \midrule
DPS                      & 100        & 1                     & 42 & 20.50 & 0.5980 & 132.8 \\ 
DSG                      & 100        & 1                     & 42 & 22.12 & 0.4739 & 76.24 \\
DMAP & 100        & 1                     & 48 & 22.69 & 0.4763 & 80.50 \\
DMAP & 50         & 2                     & 45 & 22.60 & 0.4591 & 72.05 \\
DMAP & 25         & 4                     & 44 & 22.65 & 0.4681 & 76.63 \\ \bottomrule 
\end{tabular}
}
\end{minipage}\hfill
\begin{minipage}[t]{0.48\linewidth}
\centering
\vspace{0pt}
\captionof{table}{Impact of dataset size and type.}
\label{tab:abldata}
\resizebox{\linewidth}{!}{
\begin{tabular}{@{}lcccc@{}}
\toprule
\# of Image & Dataset        & PSNR & LPIPS & FID \\ \midrule
0 (DPS)        & -       & 22.33 & 0.4137 & 58.48 \\
25         & ImageNet       & 22.26 & 0.4050 & 75.64 \\
50         & ImageNet       & 22.84 & 0.3527 & 55.78 \\
100         & ImageNet       & 22.98 & 0.3229 & 44.59 \\
1000        & ImageNet       & 23.35 & 0.3142 & 36.23 \\
100000      & ImageNet       & 23.46 & 0.3147 & 35.96 \\
-        & Self-gen & 23.09 & 0.3082 & 38.60 \\ \bottomrule
\end{tabular}
}
\end{minipage}\hfill
\begin{minipage}[t]{0.5\linewidth}
\vspace{0pt}
\captionof{table}{Impact of GPU hours.}
\label{tab:abltime}
\centering
\resizebox{0.83\linewidth}{!}{
\begin{tabular}{@{}llll@{}}
\toprule
\# of GPU hours & PSNR & LPIPS & FID \\ \midrule
0 (DPS)         & 22.33  & 0.4137 & 58.48 \\
2               & 23.04 & 0.3716 & 49.18 \\
4               & 23.15 & 0.3310 & 40.34 \\
8               & 23.46 & 0.3147 & 35.96 \\ \bottomrule
\end{tabular}
}
\end{minipage}
\end{minipage}

\subsection{Ablation Studies} 

\textbf{Improvement I} In Table~\ref{tab:abltk}, we present an ablation study to examine how balancing the diffusion steps $T$ and gradient steps $K$ affects the performance of DMAP, under the constraint $T \times K = 100$. The setting of 100 diffusion steps is commonly used in fast DPS \citep{yang2024guidance, he2023manifold}. Our findings indicate that for $K=1,2,4$, the PSNR of DMAP does not vary significantly. However, we observe that when $K=2$, the LPIPS and FID scores are better than those of other settings, and outperform DPS and DSG.

\textbf{Improvement II} In Table~\ref{tab:abldata} and Figure~\ref{fig:nqual}, we present the impact of dataset size and type on our Improvement II. The results show that a CSE trained with only 100 images can significantly enhance the performance of DPS, while a CSE trained with just 25 images fails to do so, likely due to overfitting. Moreover, the benefit of increasing the dataset size becomes minimal once the dataset contains $\ge 1000$ images. Additionally, a CSE trained with self-generated images also leads to notable improvements. In Table~\ref{tab:abltime}, we present the impact of training time on our Improvement II. Results indicate that a CSE trained for $\ge 4$ hours can significantly enhance the performance of DPS.

\section{Related Work}
Recently, zero-shot conditional sampling from diffusion models has drawn great attention. Various approaches have been developed, including linear projection \citep{Wang2022ZeroShotIR,Kawar2022DenoisingDR,Chung2022ImprovingDM,Lugmayr2022RePaintIU,song2022pseudoinverse,dou2023diffusion,pokle2024trainingfree,cardoso2024monte}, Monte Carlo sampling \citep{wu2024practical,Phillips2024ParticleDD,dou2024diffusion}, and variational inference \citep{feng2023score,Mardani2023AVP,janati2024divide}. Among these paradigms, Diffusion Posterior Sampling (DPS) and its variants are the most popular and widely adopted \citep{Chung2022DiffusionPS,Song2023LossGuidedDM,Yu2023FreeDoMTE,boys2023tweedie,Rout2023BeyondFT,zhang2024improving,Chung2023PrompttuningLD,Bansal2023UniversalGF,yang2024guidance}. This popularity is due to the DPS family's ability to handle high-resolution images, accommodate non-linear operators, and operate with reasonable efficiency.

On the other hand, unlike Monte Carlo based approaches \citep{wu2024practical,dou2024diffusion}, DPS is less explainable. \citet{Chung2022DiffusionPS} and \citet{ Song2023LossGuidedDM} propose that DPS works as a conditional score estimator, and provide an upper-bound on the estimation error. However, \citet{yang2024guidance} show that the estimation error has a non-trivial lower-bound for high-dimension data. To the best of our knowledge, we are the first to check the estimation error of DPS as a conditional score estimator in a real-life scenario, and the first to argue that DPS is closer to maximize a posterior (MAP). Besides the observations in this paper, our MAP assumption also explains previous observations that are contradicted to conditional score estimation assumption, such as why DPS works well despite it has large estimation error lower-bound \citep{yang2024guidance}, why DPS works with Adam \citep{He2023FastAS,Chung2023PrompttuningLD}, and why DPS has low sample diversity \citep{Cohen2023FromPS}.

During the conference, the authors becomes aware of \citet{gutha2024inverse}, a recent work that also views DPS as a MAP problem and use multi-step gradient ascent. While this work does not consider the theoretical formulation and does not employ projection between gradient ascent steps.

\section{Discussion \& Conclusion}

One limitation of the conditional score estimator used in Improvement II is that it employs ControlNet \citep{Zhang2023AddingCC}, which is not optimized for sample and computational efficiency. It would be particularly interesting if we could improve DPS using just a handful of examples and shorter time, extending its applicability from zero-shot to few-shot scenarios.

To conclude, we demonstrate that DPS does not function as a conditional score estimator through three key observations. Instead, we hypothesize that DPS is closer to maximizing a posterior distribution. Based on this new hypothesis, we propose two improvements to DPS. Empirical results show that both of our proposed improvements significantly enhance the performance of DPS.

\subsubsection*{Acknowledgments}
This work is supported by Wuxi Research Institute of Applied Technologies, Tsinghua University under Grant 20242001120. No Patent or IP Claims.

\bibliography{iclr2024_conference}
\bibliographystyle{iclr2024_conference}
\newpage

\appendix
\section{Notations and Proof of Main Results}
\label{app:pf}
\subsection{Notations}
In the main text of this paper, we follow the notation of diffusion in pixel space without explicitly emphasizing the existence of the auto-encoder in latent diffusion model. Some previous works \citep{song2023solving,Chung2023PrompttuningLD} explicitly emphasize the existence of the auto-encoder, as they address the approximation errors caused by it. However, our paper and some other works \citep{yang2024guidance} omit the auto-encoder, as it is not the subject of study and is not considered very important.

For completeness, we provide the notation for latent diffusion with the auto-encoder here. More specifically, we denote the decoder as $\mathcal{D}(.)$ and the diffusion state in latent space as $Z_t$. Consequently, the DPS update can be expressed as:
\begin{gather}
    \textrm{Pixel diffusion notation: }X_{t-1} = X_{t-1} - \zeta_t \nabla_{X_t}\|f(\mathbb{E}[X_0|X_t]) - y\|, \\ \notag
    \textrm{Latent diffusion notation: }Z_{t-1} = Z_{t-1} - \zeta_t \nabla_{Z_t}\|f(\mathcal{D}(\mathbb{E}[Z_0|Z_t])) - y\|.
\end{gather}
All other formulas can be seamlessly transferred into the latent space by replacing $\|f(\mathbb{E}[X_0|X_t]) - y\|$ with $\|f(\mathcal{D}(\mathbb{E}[Z_0|Z_t])) - y\|$. For instance, we provide the algorithms for DPS and DMAP in the latent space as shown in Algorithm~\ref{alg:ldps} and Algorithm~\ref{alg:lmax}, respectively.

\resizebox{\linewidth}{!}{
\begin{minipage}[t]{0.55\textwidth} %
\vspace{0pt}
\IncMargin{1.0em}
\begin{algorithm}[H]
\DontPrintSemicolon
\caption{Latent DPS}\label{alg:ldps}
\textbf{input} $T,f(.),y,\zeta_t$\;
$\quad$ $z_T = \mathcal{N}(0,I)$\;
$\quad$\textbf{for} $t=T$ {\bfseries to} $1$ \textbf{do}\;
$\quad\quad$$z_{t-1} \sim p_{\theta}(Z_{t-1}|z_t)$\;
$\quad\quad$$z_{t-1} = x_{t-1} - \zeta_t \nabla \|f(\mathcal{D}(\mathbb{E}[Z_0|z_t]))-y\|$\;
$\quad$\textbf{return} $\mathcal{D}(z_0)$\;
\end{algorithm}
\end{minipage}
\begin{minipage}[t]{0.59\textwidth} %
\vspace{0pt}
\IncMargin{1.0em}
\begin{algorithm}[H]
\DontPrintSemicolon
\caption{Latent DMAP}\label{alg:lmax}
\textbf{input} $T,K,f(.),y,\zeta_t$\;
$\quad$ $z_T = \mathcal{N}(0,I)$\;
$\quad$\textbf{for} $t=T$ {\bfseries to} $1$ \textbf{do}\;
$\quad\quad$$z_{t-1} \sim p_{\theta}(Z_{t-1}|z_t),\mu_{t-1} = \mathbb{E}[Z_{t-1}|z_t]$\;
$\quad$$\quad$\textbf{for} $j=1$ {\bfseries to} $K$ \textbf{do}\;
$\quad\quad$$\quad$$z_{t-1} = z_{t-1} - \zeta_t  \nabla \|f(\mathcal{D}(\mathbb{E}[Z_0|z_{t-1}]))-y\|$\;
$\quad\quad$$\quad$$z_{t-1} = \mu_{t-1} + \sqrt{d}\sigma_t\frac{z_{t-1}-\mu_{t-1}}{\|z_{t-1}-\mu_{t-1}\|}$\;
$\quad$\textbf{return} $\mathcal{D}(z_0)$\;
\end{algorithm}
\end{minipage}
}
\subsection{Proof of Main Results}

\textbf{Proposition 1.}\textit{
DPS is equivalent to DDPM with estimated conditional score:
\begin{gather}
    s_{DPS}(X_t,t,y) = s_{\theta}(X_t,t) - \frac{\sqrt{\alpha_t}}{\beta_t} \zeta_t \nabla_{X_t}\|f(\mathbb{E}[X_0|X_t])-y\|. \tag{9} 
\end{gather}
}
\begin{proof}
    Take Eq.~\ref{eq:dpsscore} into the DDPM update in Eq.~\ref{eq:ddpm2}, we have
    \begin{gather}
    p_{\theta}(X_{t-1}|X_t,y) = \mathcal{N}(\frac{1}{\sqrt{\alpha_t}}(X_t + \beta_t s_{DPS}(X_t,t,y)),\sigma^2_t I) \notag \\
    = \mathcal{N}(\frac{1}{\sqrt{\alpha_t}}(X_t + \beta_t s_{\theta}(X_t,t)),\sigma^2_t I)  - \zeta_t \nabla_{X_t}\|f(\mathbb{E}[X_0|X_t]) - y\|,
    \end{gather}
    which is equivalent to the DDPM sampling followed by a DPS update.
\end{proof}
\textbf{Proposition 3.}\textit{
    MAP in Eq.~\ref{eq:DMAP} is equivalent to the following in probability as $d\rightarrow \infty$:
\begin{gather}
    X_{t-1} \leftarrow \arg \max \log p_{\theta}(y|X_{t-1}) \textrm{, s.t. } X_{t-1}\sim p_{\theta}(X_{t-1}|X_t). \tag{16}
\end{gather}
}
\begin{proof}
We can rewrite the optimization target in Eq.~\ref{eq:DMAP} as
\begin{align}
   \arg\max\log p_{\theta}(X_{t-1}|X_t,y) &\overset{(a)}{=}\arg\max\log p_{\theta}(X_{t-1},y|X_t)/p_{\theta}(y|X_t) \notag \\
   &\overset{(b)}{=} \arg\max\log p_{\theta}(X_{t-1},y|X_t) \\
   &\overset{(c)}{=} \arg\max\log p_{\theta}(y|X_{t-1},X_t)p_{\theta}(X_{t-1}|X_t),
\end{align}
where (b) holds since $p_{\theta}(y|X_t)$ is fixed and (a)(c) hold due to the Bayesian rule.
As $y-X_0-...-X_{t-1}-X_t$ forms a Markov Chain, we have
\begin{gather}
    p_{\theta}(y|X_{t-1},X_t) = p_{\theta}(y|X_{t-1}).
\end{gather}
Then we have 
\begin{align}
    \arg\max\log p_{\theta}(X_{t-1}|X_t,y) &= \arg\max\log p_{\theta}(y|X_{t-1},X_t)p_{\theta}(X_{t-1}|X_t) \notag \\ 
    &= \arg\max\log p_{\theta}(y|X_{t-1})p_{\theta}(X_{t-1}|X_t).
\end{align}
Following Theorem 3.1.1 of \citet{cover1999elements}, we know that as the dimension $d\rightarrow \infty$, the log likelihood of the isotropic Gaussian distribution $p(X_{t-1}|X_t)$ converge to a constant in probability:
\begin{gather}
    \log p_{\theta}(X_{t-1}|X_t) \overset{P}{\rightarrow} -\frac{d}{2}\ln (2\pi e \sigma^2).
\end{gather}
Therefore $\forall x_{t-1}\sim p_{\theta}(X_{t-1}|X_t)$, the likelihood is a constant in the limit. Then, the optimization target can be rewritten into
\begin{gather}
    \arg\max p_{\theta}(y|X_{t-1}) \textrm{ s.t.} X_{t-1}\sim p_{\theta}(X_{t-1}|X_t).
\end{gather}
This completes the proof.
\end{proof}

\textbf{Proposition 4.}\textit{
Denote the cross entropy as $\mathcal{H}(.,.)$, then
\begin{gather}
    \mathcal{H}(q_{\theta}(X_{t-1}|X_t,y),p_{\theta}(X_{t-1}|X_t,y)) < \mathcal{H}(p_{\theta}(X_{t-1}|X_t),p_{\theta}(X_{t-1}|X_t,y)) \notag \\ \Rightarrow \mathbb{E}_{q_{\theta}(X_{t-1}|X_t,y)}[\log p_{\theta}(y|X_{t-1})] > \mathbb{E}_{p_{\theta}(X_{t-1}|X_t)}[\log p_{\theta}(y|X_{t-1})]. \tag{18}
\end{gather}
}
\begin{proof}
As $y-X_{t-1}-X_t$ forms a Markov chain, we have
    \begin{gather}
        p_{\theta}(X_{t-1}|X_t,y) = p_{\theta}(y|X_{t-1})p_{\theta}(X_{t-1}|X_t)/p_{\theta}(y|X_t).
    \end{gather}
Then we can rewrite cross entropy as
    \begin{gather}
        \mathcal{H}(q_{\theta}(X_{t-1}|X_t,y),p_{\theta}(X_{t-1}|X_t,y)) < \mathcal{H}(p_{\theta}(X_{t-1}|X_t),p_{\theta}(X_{t-1}|X_t,y)) \notag \\
        \Rightarrow \mathbb{E}_{q_{\theta}(X_{t-1}|X_t,y)}[\log p_{\theta}(y|X_{t-1}) + \log p_{\theta}(X_{t-1}|X_t)] > \mathbb{E}_{p_{\theta}(X_{t-1}|X_t)}[\log p_{\theta}(y|X_{t-1}) + \log p_{\theta}(X_{t-1}|X_t)] \label{eq:rce}
    \end{gather}
As cross entropy is always larger than entropy \citep{cover1999elements}, for any distribution $q \neq p(X_{t-1}|X_t)$, we have
\begin{gather}
    \mathbb{E}_{q}[-\log p_{\theta}(X_{t-1}|X_{t})] > \mathbb{E}_{p_{\theta}(X_{t-1}|X_t)}[-\log p_{\theta}(X_{t-1}|X_t)] \notag \\
    \Rightarrow \mathbb{E}_q[\log p_{\theta}(X_{t-1}|X_{t})] < \mathbb{E}_{p_{\theta}(X_{t-1}|X_t)}[\log p_{\theta}(X_{t-1}|X_{t})] \label{eq:rce2}.
\end{gather}
Taking Eq.~\ref{eq:rce2} into Eq.~\ref{eq:rce}, we have our result
\begin{gather}
    \mathbb{E}_{q_{\theta}(X_{t-1}|X_t,y)}[\log p_{\theta}(y|X_{t-1})] > \mathbb{E}_{p_{\theta}(X_{t-1}|X_t)}[\log p_{\theta}(y|X_{t-1})].
\end{gather}
\end{proof}

\subsection{Additional Theoretical Results}
For now, we can not derive MAP directly from DPS. However, with additional assumptions, it is possible to derive MAP directly from a variant of DPS: DSG \citep{yang2024guidance}. More specifically, DSG improves DPS with a spherical projection. The update of DSG is as follows:
\begin{gather}
    X_{t-1} = \mathbb{E}[X_{t-1}|X_t] - \sqrt{n}\sigma_t\frac{\nabla_{X_t}||f(\mathbb{E}[X_0|X_t]) - y||}{||\nabla_{X_t}||f(\mathbb{E}[X_0|X_t]) - y||||},
\end{gather}
\begin{assumption}
\label{ass:app}
    We need some additional assumptions:
    \begin{enumerate}
        \item The posterior mean $\mathbb{E}[X_0|X_t]$ is a perfect approximation to posterior samples from $p(X_0|X_t)$. In that case, the approximation error in Eq.~\ref{eq:likeli} diminishes, \textit{i.e.},:
        \begin{gather}
            p_{\theta}(y|X_t) = p(y|X_0=\mathbb{E}[X_0|X_t]) = \exp{-\zeta_t||f(\mathbb{E}[X_0|X_t]) - y||}.
        \end{gather}
        \item The $\sigma_t^2$ in Eq.~\ref{eq:posterior} is so small that $\log p(y|X_{t-1})$ is locally linear in range of $\sqrt{n}\sigma_t$. The $X_t, X_{t-1}$ is so close that $\nabla_{X_{t-1}}\log p(y|X_{t-1}) \approx \nabla_{X_t}\log p(y|X_{t})$.
    \end{enumerate}
\end{assumption}
Under those assumptions, we can show that DSG is also a MAP:
\begin{proposition}
    The DSG solves the MAP problem $X_{t-1} \leftarrow \arg\max p_{\theta}(X_{t-1}|X_t,y)$ in Hypothesis~\ref{ass:map} when dimension $n$ is high.
\end{proposition}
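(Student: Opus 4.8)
The plan is to combine Proposition~\ref{prop:sopt} with the Gaussian concentration fact already used in the main text, and then to check that the DSG update is precisely the first-order solution of the resulting spherically-constrained maximization. First I would invoke Proposition~\ref{prop:sopt} (whose hypotheses are granted here, since the first part of Assumption~\ref{ass:app} makes the likelihood exact) to reduce the MAP target $\arg\max p_{\theta}(X_{t-1}|X_t,y)$ to the constrained problem $\arg\max \log p_{\theta}(y|X_{t-1})$ subject to $X_{t-1}\sim p_{\theta}(X_{t-1}|X_t)$. As the dimension $n\to\infty$, the isotropic Gaussian $p_{\theta}(X_{t-1}|X_t)$ concentrates on the sphere $\mathcal{S}$ with center $\mu := \mathbb{E}[X_{t-1}|X_t]$ and radius $\sqrt{n}\sigma_t$, so the feasible set becomes $\mathcal{S}$. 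Hence, in probability, the MAP problem is equivalent to maximizing $g(X_{t-1}) := \log p_{\theta}(y|X_{t-1})$ over the sphere $\|X_{t-1}-\mu\| = \sqrt{n}\sigma_t$.

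Next I would linearize. By the second part of Assumption~\ref{ass:app}, $g$ is locally affine across the range $\sqrt{n}\sigma_t$, so on $\mathcal{S}$ we may replace $g$ by its first-order expansion $g(X_{t-1}) \approx g(\mu) + \langle \nabla g(\mu), X_{t-1}-\mu\rangle$. Maximizing an affine functional over a sphere of radius $\sqrt{n}\sigma_t$ admits the closed-form maximizer obtained by aligning $X_{t-1}-\mu$ with the gradient (a one-line Cauchy–Schwarz or Lagrange-multiplier argument):
\[ X_{t-1} = \mu + \sqrt{n}\sigma_t \frac{\nabla g(\mu)}{\|\nabla g(\mu)\|}. \]
It then remains to evaluate $\nabla g(\mu)$. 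Using the first part of Assumption~\ref{ass:app}, $\log p_{\theta}(y|X_t) = -\zeta_t\|f(\mathbb{E}[X_0|X_t])-y\|$, and using the gradient-transfer in the second part, $\nabla g(\mu) = \nabla_{X_{t-1}}\log p_{\theta}(y|X_{t-1}) \approx \nabla_{X_t}\log p_{\theta}(y|X_t) = -\zeta_t\nabla_{X_t}\|f(\mathbb{E}[X_0|X_t])-y\|$. Substituting, the positive scalar $\zeta_t$ cancels between numerator and denominator, leaving
\[ X_{t-1} = \mathbb{E}[X_{t-1}|X_t] - \sqrt{n}\sigma_t \frac{\nabla_{X_t}\|f(\mathbb{E}[X_0|X_t])-y\|}{\|\nabla_{X_t}\|f(\mathbb{E}[X_0|X_t])-y\|\|}, \]
which is exactly the DSG update, completing the argument.

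The main obstacle I anticipate is the linearization step: the sphere radius $\sqrt{n}\sigma_t$ grows with $n$, so the affine approximation of $g$ must hold not merely infinitesimally but uniformly across a macroscopically large sphere, and the gradient-transfer $\nabla_{X_{t-1}}g \approx \nabla_{X_t}\log p_{\theta}(y|X_t)$ must remain valid there. This is precisely the content of the second part of Assumption~\ref{ass:app}; it is a strong hypothesis, and the delicate point is to argue that the \say{in probability} concentration from Proposition~\ref{prop:sopt} can be composed with this pointwise linearization without the error terms accumulating. I would handle this by keeping the approximation symbols explicit and observing that, under Assumption~\ref{ass:app}, both the feasible-set reduction and the objective linearization hold up to error that vanishes as $n\to\infty$, so the constrained maximizer and the DSG update coincide in the limit.
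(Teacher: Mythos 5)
Your proposal is correct and follows essentially the same route as the paper's proof: both reduce the MAP objective to the spherically constrained maximization of $\log p_{\theta}(y|X_{t-1})$ via Proposition~\ref{prop:sopt} and the Gaussian concentration on the sphere of radius $\sqrt{n}\sigma_t$, and both identify the DSG update as the maximizer of the (locally affine, by Assumption~\ref{ass:app}) objective on that sphere, with the $\zeta_t$ factor cancelling in the normalization. The only difference is direction of presentation --- the paper starts from the DSG update and verifies it is the constrained maximizer, while you derive the DSG update as the closed-form solution --- and your explicit remark that the linearization must hold uniformly over a sphere whose radius grows with $n$ is a fair caveat that the paper leaves implicit.
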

\begin{proof}
    We first consider a gradient ascent with learning rate $\alpha_t$:
    $$
    X_{t-1} = \mathbb{E}[X_{t-1}|X_t] + \alpha_t \nabla_{X_t}\log p(y|X_t).
    $$
    As we have assumed in Assumption~\ref{ass:app}.2, the $\log p(y|X_t)$ is locally linear and $\nabla_{X_{t-1}}\log p(y|X_{t-1}) \approx \nabla_{X_t}\log p(y|X_{t})$. As gradient is the direct of steepest descent, when $\alpha_t$ changes, the $X_{t-1} = \mathbb{E}[X_{t-1}|X_t] + \alpha_t \nabla_{X_t}\log p(y|X_t)$ is the maximizer of $\log p(y|X_{t-1})$, given the distance from the point $X_{t-1}$ to center $\mathbb{E}[X_{t-1}|X_t]$ is fixed.
    On the other hand, the DSG can be written as the above gradient ascent
    \begin{gather}
    X_{t-1} =  \mathbb{E}[X_{t-1}|X_t] - \sqrt{n}\sigma_t\frac{\nabla_{X_t}||f(\mathbb{E}[X_0|X_t]) - y||}{||\nabla_{X_t}||f(\mathbb{E}[X_0|X_t]) - y||||} \\
    \overset{a}{=} \mathbb{E}[X_{t-1}|X_t] + \underbrace{\frac{\sqrt{n}\sigma_t}{||\nabla_{X_t}f(\mathbb{E}[X_0|X_t]) - y||||}}_{\alpha_t}\nabla_{X_t}\log p(y|X_{t}).
    \end{gather}
    where (a) holds due to Assumption~\ref{ass:app}.1. We further notice that DSG is equivalent to the above gradient ascent. Further, from the update procedure of DSG, the distance of $X_{t-1}$ to $\mathbb{E}[X_0|X_t]$ is $\sqrt{n}\sigma_t$. Therefore, the result of DSG is the minimizer of $\nabla_{X_{t-1}}p(y|X_{t-1})$, on the sphere surface centred at $\mathbb{E}[X_{t-1}|X_t]$ with radius $\sqrt{n}\sigma_t$:
    $$
    X_{t-1} \leftarrow \arg\max \log p(y|X_{t-1}), X_{t-1} \in \mathcal{S}(\mathbb{E}[X_{t-1}|X_t],\sqrt{n}\sigma_t),
    $$
    where $\mathcal{S}(\mathbb{E}[X_{t-1}|X_t],\sqrt{n}\sigma_t)$ is the sphere surface centred at $\mathbb{E}[X_{t-1}|X_t]$ with radius $\sqrt{n}\sigma_t$. Besides, from (Yang et al., 2024), we know that as dimension $n\rightarrow \infty$, $X_{t-1}\sim p(X_{t-1}|X_t)$ is equivalent to $X_{t-1}\in \mathcal{S}(\mathbb{E}[X_{t-1}|X_t],\sqrt{n}\sigma_t)$. Then as dimension $n\rightarrow \infty$, DSG is equivalent to:
    $$
    X_{t-1} \leftarrow \arg\max \log p(y|X_{t-1}), X_{t-1}~p(X_{t-1}|X_t),
    $$
    which is again equivalent to Hypothesis~\ref{ass:map} by Proposition~\ref{prop:sopt}.
\end{proof}

Despite DSG being already MAP under Assumption~\ref{ass:app}, our method remains to be MAP without it. Therefore, DSG alone can not replace DMAP, and its performance is inferior to DMAP as shown in Table~\ref{tab:result}.

With Assumption~\ref{ass:app}.1, the original DPS is equivalent to classifier-based guidance \citep{Dhariwal2021DiffusionMB}. What is classifier-based guidance on earth theoretically remains an open problem. For now, we leave the strict theoretical relationship between original DPS and MAP to future works. Once this problem is solved, the nature of classifier-based guidance can be revealed. We believe this problem is beyond the scope of a paper focused on DPS.

We can think DPS as a simplification of DMAP, with one step gradient ascent and without spherical projection. As long as the step size of diffusion model is small, the $\log p(y|X_{t-1})$ can be locally linear (Assumption~\ref{ass:app}.2). In that case, one step of gradient ascent is enough. As long as the DPS guidance step is much smaller than the DDPM step, the distance from $X_{t-1}$ to the sphere surface $\mathcal{S}(\mathbb{E}[X_{t-1}|X_t],\sqrt{n}\sigma_t)$ is small. In that case, spherical projection is no longer necessary. And when those conditions are approximately satisfied, DPS also works well.

\subsection{A Toy Example}
To better understand the relationship between DPS and MAP, we provide a toy example:
\begin{itemize}
    \item The source distribution $p(X_0)$ is 2D 2-GMM (Gaussian Mixture Model). The centres are $(-1,-1),(+1,+1)$ and the diagonal $\sigma_0=0.3$.
    \item The operator f is inpainting. More specifically, the second dimension of the random variable is set to 0, the measurement $y=(0.5,0)$.
    \item The diffusion is 100 steps Karras diffusion \citep{Karras2022ElucidatingTD} with $\sigma_{max} = 4$ and $\rho=7$.
    \item The diffusion scheduler is Euler Ancestral, and the DPS $\zeta_t=0.05$.
\end{itemize}
In Figure~\ref{fig:toy}, it is shown that the true posterior is a two mode distribution. However, the samples of DPS cover only one mode of true posterior. In that example, the behaviour of DPS is closer to MAP. 
\begin{figure*}[thb]
\centering
\includegraphics[width=\linewidth]{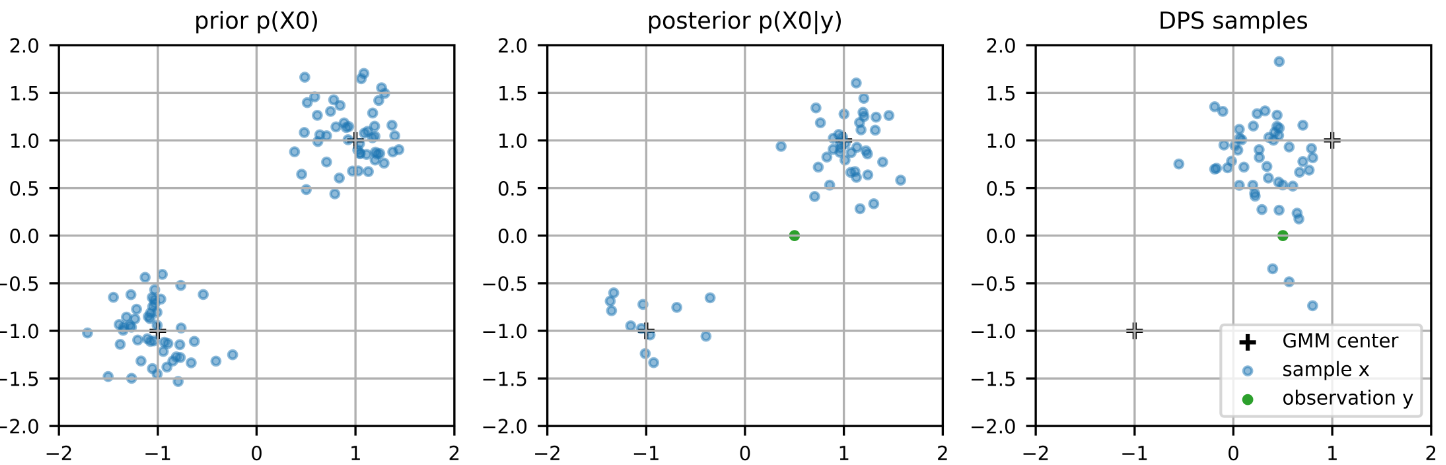}
\caption{A toy example with 2D 2-GMM and inpainting operator. The true posterior is a 2 modal distribution, while the samples of DPS concentrate to one mode.}
\label{fig:toy}
\end{figure*}

\section{Additional Experiments}
\subsection{Additional Experimental Setup}
\label{app:setup}
All experiments were conducted on a computer equipped with an Nvidia A100 GPU, CUDA 12.1, and PyTorch 2.0. For the base diffusion model, we used the official Stable Diffusion 2.0 base model. The checkpoint for this model can be found at \url{https://huggingface.co/stabilityai/stable-diffusion-2}.

The score error plot in Figure.~\ref{fig:se} is computed using the image Figure.~\ref{fig:dpszeta}, which is \url{ILSVRC2012_val_00000013.png} of ImageNet dataset. Similarly, the pixel variance in Table.~\ref{tab:pixstd} using the image in Figure.~\ref{fig:var}, which is \url{ILSVRC2012_val_00000004.png} of ImageNet dataset.

\subsection{Additional Details on CSE}

For the conditional score estimator (CSE), we followed the diffusers' implementation of ControlNet \citep{Zhang2023AddingCC}. This implementation is available at \url{https://github.com/huggingface/diffusers/blob/main/examples/controlnet/train_controlnet.py}. Specifically, we used an Adam \citep{Kingma2014AdamAM} optimizer with a batch size of 64 and a learning rate of $5 \times 10^{-6}$. We trained the ControlNet with an $\epsilon$-prediction target for 5000 steps, which took approximately 8 hours on a single Nvidia A100 GPU. Data augmentation techniques used include random resizing with a scale range of $0.8$ to $2.0$, random horizontal flipping with a probability of $0.5$, and random color jittering with a strength of $0.1$.

\subsection{Additional Details on Baselines and Hyper-parameters}
\begin{table}[thb]
\centering
\caption{Detailed hyper-parameters for different methods and operators.}
\label{tab:baselines}
\resizebox{\linewidth}{!}{
\begin{tabular}{@{}lccc@{}}
\toprule
 & SR $\times$8 & \multicolumn{2}{c}{Gaussian Deblur, Non-linear Deblur} \\ \midrule
DPS & $T=500,\zeta_t=4.8$ & \multicolumn{2}{c}{$T=500,\zeta_t=0.6$} \\
PSLD & $T=500,\zeta_t=4.8,\gamma_t=0.1$ & \multicolumn{2}{c}{$T=500,\zeta_t=0.6,\gamma_t=0.1$} \\
FreeDOM & $T=500,\zeta_t=4.8,K=2,[c_1,c_2]=100,250$ & \multicolumn{2}{c}{$T=500,\zeta_t=0.6,K=2,[c_1,c_2]=100,250$} \\
ReSample & $T=500,\zeta_t=4.8,K=200,N=50,[c_1,c_2]=100,250$ & \multicolumn{2}{c}{$T=500,\zeta_t=0.6,K=200,N=50,[c_1,c_2]=100,250$} \\
DSG & $T=500,\zeta_t=0.08$ & \multicolumn{2}{c}{$T=500,\zeta_t=0.02$} \\
DMAP (same) & $T=250,K=2,\zeta_t=9.6$ & \multicolumn{2}{c}{$T=250,K=2,\zeta_t=1.2$} \\
DMAP (full) & $T=500,K=3,\zeta_t=4.8$  & \multicolumn{2}{c}{$T=500,K=3,\zeta_t=0.6$} \\
DPS + CSE & \multicolumn{3}{c}{Same as DPS} \\
DMAP + CSE & \multicolumn{3}{c}{Same as DMAP} \\
\bottomrule
\end{tabular}
}
\end{table}
\textbf{PSLD} \citep{rout2024solving} includes an additional gluing term to the original DPS for better performance in the latent space. Specifically, denoting the encoder of latent diffusion as $\mathcal{E}(.)$ and the decoder as $\mathcal{D}(.)$, the PSLD update is given by:
\begin{gather}
    Z_{t-1} = Z_{t-1} - \zeta_t \nabla_{Z_t}\|f(\mathbb{E}[Z_0|Z_t]) - y\|, \\ \notag  - \gamma_t \nabla_{Z_t}\|\mathbb{E}[Z_0|Z_t] - \mathcal{E}(f^T(y) + \mathcal{D}(\mathbb{E}[Z_0|Z_t]))-f^T(f(\mathcal{D}(\mathbb{E}[Z_0|Z_t])))))\|,
\end{gather}
where $f^T(.)$ is the transpose of the operator $f(.)$. Therefore, PSLD is applicable only to linear operators. In Table~\ref{tab:baselines}, we present how the parameters $\zeta_t$ and $\gamma_t$ are chosen for different tasks.

\textbf{FreeDOM} \citep{Yu2023FreeDoMTE} enhances DPS by incorporating an additional step where the algorithm moves back and forth along the backward Markov chain. This approach is termed \textit{efficient time-travel}. Specifically, for $t \in [c_1, c_2]$, FreeDOM moves back to time $t+1$ for $K$ iterations by adding noise back.
\begin{gather}
    X_t = \sqrt{1-\beta_t}X_{t-1} + \mathcal{N}(0,\beta_t I).
\end{gather}
In this way, FreeDOM effectively performs $K$ steps of gradient ascent. Given a small range for $[c_1, c_2]$, FreeDOM does not significantly increase computational complexity. In Table~\ref{tab:baselines}, we present how the parameters $\zeta_t$, $K$, $c_1$, and $c_2$ are chosen for different tasks.

\textbf{ReSample} \citep{song2023solving} builds upon the \textit{efficient time-travel} framework of FreeDOM. In addition, it further optimizes the image or latent representation directly and projects them back by adding noise. Specifically, for every $N$ steps, ReSample solves the optimization problem by performing $K$ steps of gradient ascent in the pixel space or latent space:
\begin{gather}
    \textrm{Pixel space optimization: } Z^* = \mathcal{E}(\arg\min_X \frac{1}{2}\|y - f(X)\|), \\\notag 
    \textrm{Latent space optimization: } Z^* = \arg\min_Z \frac{1}{2}\|y - f(\mathcal{D}(Z))\|,
\end{gather}
with initialization by Tweedie's formula:
\begin{gather}
    \textrm{Pixel space initialization: } X = \mathcal{D}(\mathbb{E}[X_0|X_t]), \\\notag
    \textrm{Latent space initialization: }Z = \mathbb{E}[X_0|X_t].
\end{gather}
After this update, ReSample projects $Z^*$ back by adding noise back with hyper-parameter $\eta_t$:
\begin{gather}
    Z_{t-1} = \frac{\eta_t\sqrt{\bar{\alpha_t}}\mathbb{E}[Z_0|Z_t] + (1-\bar{\alpha_t})Z^*}{\eta_t + (1-\bar{\alpha_t})} + \mathcal{N}(0,\frac{\eta_t(1-\bar{\alpha_t})}{\eta_t + (1-\bar{\alpha_t})}I).
\end{gather}
In practice, ReSample does not significantly increase the complexity of DPS, thanks to the skipping parameter $N$. In Table~\ref{tab:baselines}, we present how the parameters $\zeta_t$, $K$, and $N$ are chosen for different tasks.

\textbf{DSG} \citep{yang2024guidance} identifies that the distribution $p(X_{t-1}|X_t)$ concentrates on the surface of a sphere. Therefore, DSG proposes to project the result of DPS onto this sphere. DSG does not perform the DPS update directly. Instead, it computes the scaled norm of the update:
\begin{gather}
    u^* = -\sqrt{d}\sigma_t \frac{\nabla_{X_t}\|f(\mathbb{E}[X_0|X_t])-y\|}{\|\nabla_{X_t}\|f(\mathbb{E}[X_0|X_t])-y\|\|},
\end{gather}
so that the norm of $u^*$ matches the norm of the DDPM noise $\mathcal{N}(0, \sigma_t^2 I)$. Then, DSG performs the update by controlling the strength parameter $\zeta_t$:
\begin{gather}
    \eps = \mathcal{N}(0,\sigma_t^2 I),\\\notag 
    u^m = \eps + \zeta_t (u^* - \eps),\\\notag 
    X_{t-1} = \mathbb{E}[X_{t-1}|X_t] + \sqrt{d}\sigma_t \frac{u^m}{\|u^m\|}.
\end{gather}
DSG does not increase the complexity of DPS. In Table~\ref{tab:baselines}, we present how the parameters $\zeta_t$ are chosen for different tasks.

\textbf{DMAP} For DMAP with similar complexity to DPS, we set $K=2$ and reduce the number of diffusion steps $T$ by half, ensuring that DMAP (same) has comparable complexity to DPS. Consequently, we double $\zeta_t$ to compensate for the reduced number of diffusion steps. For DMAP with full performance, we set $K=3$ while keeping other parameters the same as in DPS.

\textbf{Methods with CSE} For methods that are used with the Conditional Score Estimator (CSE), we keep the parameters the same as those used in the methods without CSE.
\subsection{Ablation Study on Diffusion Models and Solvers}
\label{app:abl2}
\begin{table}[htb]
\caption{Ablation study on base diffusion model and solvers.}
\label{tab:abldiff}
\centering
\begin{tabular}{@{}llccc@{}} \toprule
 Diffusion & Solver                         & PSNR & LPIPS & FID \\ \midrule
Stable Diffusion 2.0 & Ancestral Sampling (DDPM) &22.33 & 0.4137 & 58.48\\
Stable Diffusion 2.0 & PF-ODE (DDIM)             & 22.08 & 0.4229 & 63.82 \\
Stable Diffusion 1.5 & Ancestral Sampling (DDPM) & 22.59 & 0.4074 & 69.74 \\ \bottomrule
\end{tabular}
\end{table}
\textbf{Stable Diffusion 1.5 vs. Stable Diffusion 2.0} Several previous works \citep{song2023solving, rout2024solving} have adopted Stable Diffusion 1.5 as the base model. Other works have not specified the version of Stable Diffusion used \citep{Chung2023PrompttuningLD, Rout2023BeyondFT}. To better understand the effect of the Stable Diffusion version, we compare the performance of Stable Diffusion 1.5 and Stable Diffusion 2.0. In our comparison, the operator is downsampling $\times 8$, the hyper-parameters are $T=500$ and $\zeta_t = 4.8$, and the algorithm chosen is simply DPS.

\begin{figure*}[htb]
\centering
    \includegraphics[width=\linewidth]{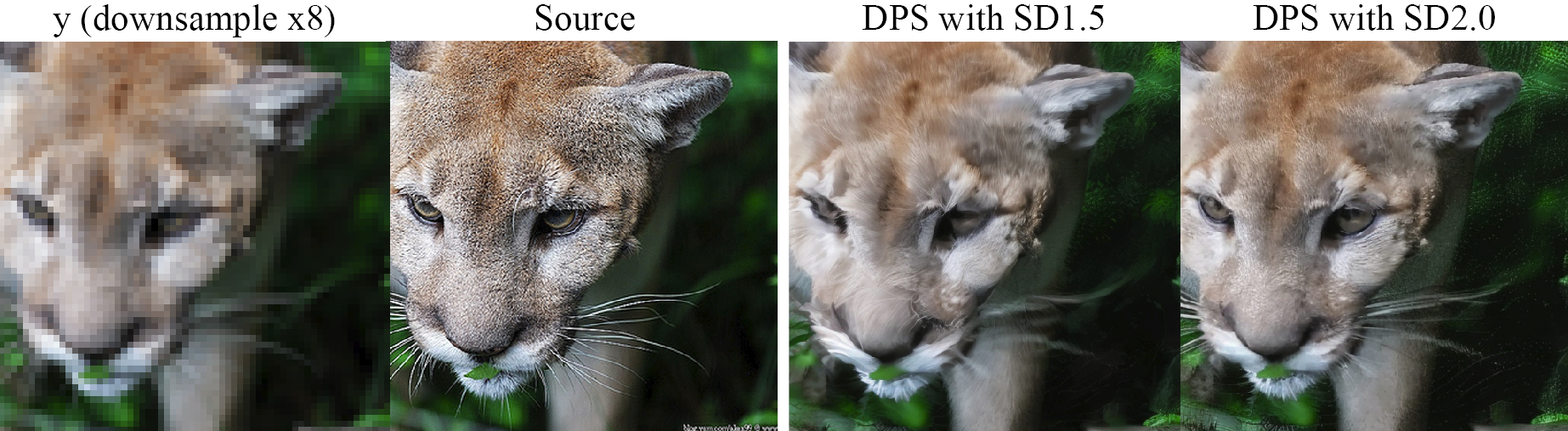}
\caption{Visual comparison of DPS reconstruction with Stable Diffusion 1.5 and 2.0.}
\label{fig:sdqual}
\end{figure*}

In Table~\ref{tab:abldiff}, it is shown that adopting Stable Diffusion 2.0 improves the performance of DPS significantly in terms of FID, while causing slight performance degradation in terms of PSNR and LPIPS. Furthermore, as shown in Figure~\ref{fig:sdqual} Stable Diffusion 2.0 looks much better visually. We have not included Stable Diffusion 3.0 in the comparison because it is a flow-matching model \citep{lipman2022flow} rather than VP/VE diffusion. For such models, flow-matching specific inversion methods, such as those proposed by \citet{pokle2023training}, might be more appropriate.

\textbf{Ancestral Sampling vs. PF-ODE} 
The original DPS \citep{Chung2022DiffusionPS} for pixel diffusion adopts ancestral sampling solvers such as DDPM. Some later works on latent diffusion adopt Probability Flow Ordinary Differential Equation (PF-ODE) solvers, such as DDIM \citep{song2023solving, Chung2023PrompttuningLD, Rout2023BeyondFT}, while other works on latent diffusion continue to use ancestral sampling solvers \citep{Yu2023FreeDoMTE, yang2024guidance}. To better understand the effect of solvers, we compared the performance of ancestral sampling and PF-ODE solvers. For this comparison, the base diffusion model is Stable Diffusion 2.0, the operator is downsampling by a factor of 8, the hyper-parameters are $T=500$ and $\zeta_t = 4.8$, and the chosen algorithm is simply DPS.

In Table~\ref{tab:abldiff}, it is shown that adopting ancestral sampling marginally improves the performance of DPS. This observation aligns with previous findings that ancestral sampling outperforms PF-ODE when the number of diffusion steps is relatively large \citep{deveney2023closing}.

\subsection{Additional Qualitative Results}

In Figure~\ref{fig:qual_01}-\ref{fig:qual_03}, we present additional qualitative results for different methods.
\subsection{Additional Quantitative Results}
In additional to super-resolution and deblurring, we present experimental results for another two operators: image inpainting and segmentation. For inpainting, we choose a fix box of size $128\times128$ centered at the image. For segmentation, we choose bedroom segmentation model by \citet{Lin2018IndoorSL}. For inpainting, we adopt first 100 images of ImageNet. For segmentation, we adopt first 100 images of LSUN bedroom. For segmentation, image restoration metrics are no longer meaningful. Therefore, we use another set of metrics, including MIoU (Mean Intersection over Union) and FID. In Table~\ref{tab:quant2}, we show that for inpaining and segmentation, our proposed approaches (DMAP and DPS+CSE) also improve DPS significantly.
\begin{table}[htb]
\centering
\caption{Additional quantitative results for inpainting and segmentation.}
\label{tab:quant2}
\resizebox{0.8\linewidth}{!}{
\begin{tabular}{@{}lcccccc@{}}
\toprule
                  & \multirow{2}{*}{Time(s)$\downarrow$} & \multicolumn{3}{c}{Inpainting} & \multicolumn{2}{c}{Bedroom Segmentation} \\ \cmidrule(lr){3-5}\cmidrule(lr){6-7}
                  &                          & PSNR$\uparrow$& LPIPS$\downarrow$& FID$\downarrow$& MIoU$\uparrow$     &  FID$\downarrow$     \\ \midrule
DPS               & 173 & 23.72 & 0.3189 & 115.4 & 0.3358 & 116.7 \\
PSLD              & 230 & 23.92 & 0.3279 & 117.6 & - &  -\\
DSG               & 193 & 23.73 & 0.3395 & 112.9 & 0.3790 & 114.7 \\
DMAP (same)        & 187 & 24.50 & 0.2958 & 111.8 & 0.3883 & 102.7 \\
DMAP (full)        & 484 & 24.68 & 0.2828 & 106.3 & 0.4188 &104.9 \\
DPS + CSE (n=100)    & 189 + 28 & 23.98 & 0.2618 & 74.7 & 0.3832 & 110.3 \\
DPS + CSE (Self-gen) & 189 + 28 & 24.03 & 0.2523 & 76.4 & 0.4159 & 107.8 \\ \bottomrule
\end{tabular}
}
\end{table}
\section{Additional Discussions}
\subsection{Additional Explanations to Previous Works}
\label{app:disc2}
Besides the three observations, our MAP hypothesis can also explain previous works that are hard to explain using conditional score estimation assumption:
\begin{itemize}
    \item \citet{He2023FastAS} and \citet{Chung2023PrompttuningLD} observe that Adam \citep{Kingma2014AdamAM} helps DPS, which is not reasonable if DPS is a conditional score estimator, because Adam is an adaptive learning rate optimizer. In other words, Adam scales the score $\nabla_{X_{t}}\log p(y|X_t)$ unevenly in different dimensions. This scaling, theoretically, makes the estimation of $\nabla_{X_{t}}\log p(y|X_t)$ inaccurate, and should deteriorate the performance of DPS. However, if we assume DPS is solving the MAP in Eq.~\ref{eq:dpsopt}, then Adam, as a general method of gradient ascent, may not harm the performance of DPS.
    \item \citet{yang2024guidance} prove that the conditional score estimation of DPS has a large approximation lowerbound, which increase as data dimension increase. If we follow the conditional score estimation assumption, DPS will not work at all for high dimension data such as images.
    \item \citet{Cohen2023FromPS} observe that the reconstruction of DPS lacks perceptually meaningful variants, and propose a greedy sampling technique to improve DPS's sample diversity. This visual observation is quantified in our Observation III.
\end{itemize}

\subsubsection*{Ethics Statement}
The approach proposed in this paper allows for conditional generation without the need to train a new model. This not only saves the energy that would typically be used to train a conditional generative diffusion model but also decreases the related carbon emissions. Nevertheless, the possible negative effects are similar to those of other conditional generative models, including issues related to trust and the ethical considerations of producing fake images.

\subsubsection*{Reproducibility Statement}
For theoretical results, the proofs for all theorems are presented in Appendix~\ref{app:pf}. For the experiments, we used four publicly accessible datasets. Additional implementation details are provided in Appendix~\ref{app:setup}. Moreover, we include the source code for reproducing the experimental results in the supplementary material.

\newpage

\begin{figure*}[htb]
\centering
    \includegraphics[width=\linewidth]{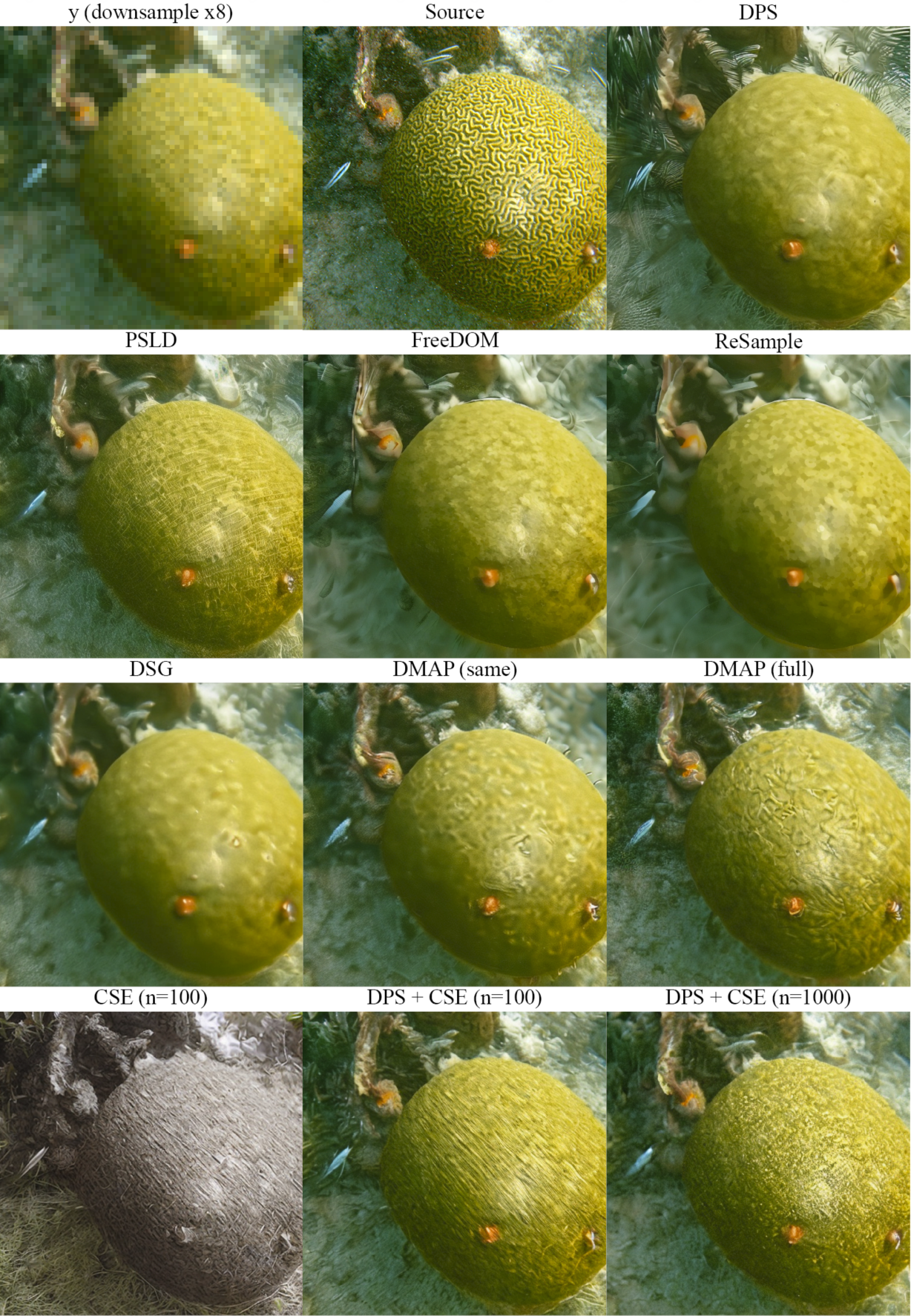}
\caption{Additional qualitative results for SR$\times 8$.}
\label{fig:qual_01}
\end{figure*}
\begin{figure*}[htb]
\centering
    \includegraphics[width=\linewidth]{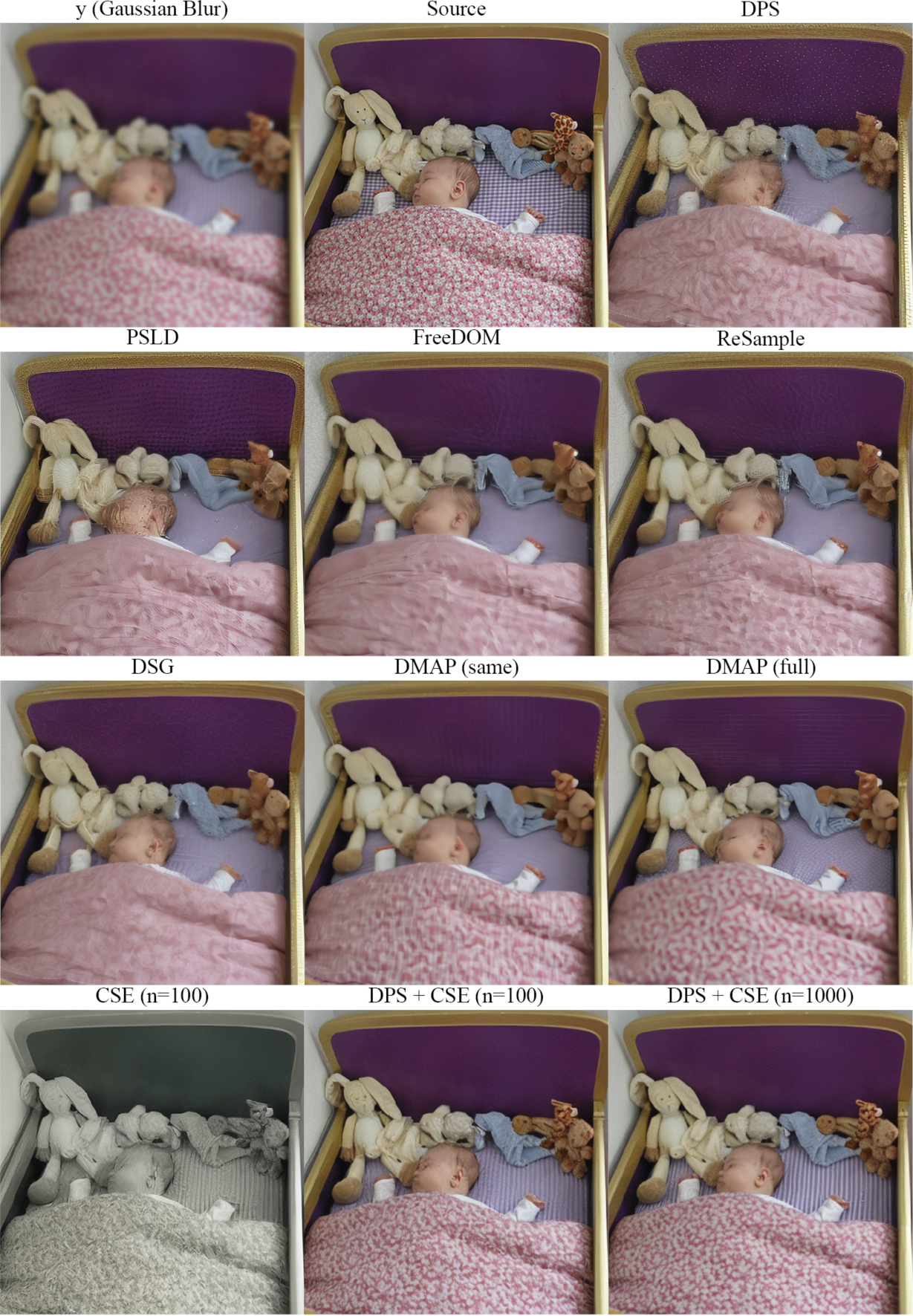}
\caption{Additional qualitative results for Gaussian deblurring.}
\label{fig:qual_02}
\end{figure*}
\begin{figure*}[htb]
\centering
    \includegraphics[width=\linewidth]{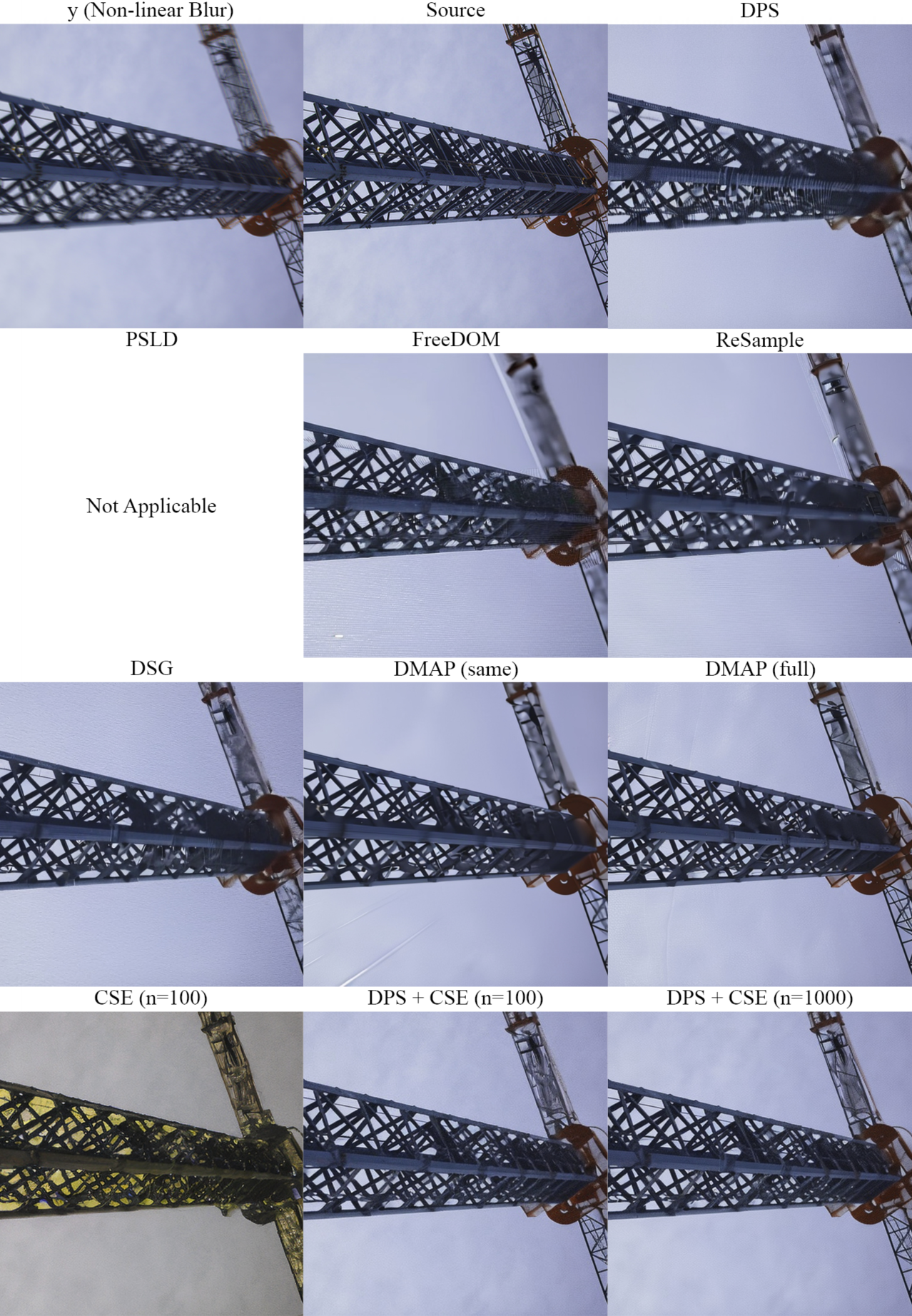}
\caption{Additional qualitative results for non-linear deblurring.}
\label{fig:qual_03}
\end{figure*}

\end{document}